\title{How Hard Is It for Message-Passing GNNs to Simulate One Weisfeiler-Lehman Color-Refinement Step?}
\author[1]{Guanyu Cui}
\author[1]{Yuhe Guo}
\author[1]{Zhewei Wei\thanks{Zhewei Wei is the corresponding author.}}
\author[2]{Hsin-Hao Su}
\affil[1]{Renmin University of China}
\affil[2]{Boston College}
\date{}
\newtheorem{lemma}{Lemma}
\newtheorem{definition}{Definition}
\newcommand{\setclass}[1]{\mathcal{#1}}
\newcommand{\func}[1]{\mathrm{#1}}
\newcommand{\alg}[1]{\textsc{#1}}
\renewcommand{\Pr}[2][]{\mathop{\mathrm{Pr}}\limits_{#1}\left[#2\right]}
\newcommand{\E}[2][]{\mathop{\mathbb{E}}\limits_{#1}\left[#2\right]}
\newcommand{\Given}{~\big|~}
\newcommand{\bs}[1]{\boldsymbol{#1}}
\newcommand{\set}[1]{\left\{#1\right\}}
\newcommand{\mset}[1]{\left\{\!\!\left\{#1\right\}\!\!\right\}}
\newcommand{\MSG}{\mathrm{MSG}\xspace}
\newcommand{\UPD}{\mathrm{UPD}\xspace}
\newcommand{\HASH}{\mathrm{HASH}\xspace}
\newcommand{\WLtype}{\mathrm{type}\xspace}
\newcommand{\prv}{\textsf{prv}\xspace}
\newcommand{\pub}{\textsf{pub}\xspace}
\newcommand{\poly}{\mathrm{poly}}
\newcommand{\GF}{\mathrm{GF}}
\begin{document}

\maketitle

\begin{abstract}
Message-passing graph neural networks (MPGNNs) are commonly compared with the Weisfeiler-Lehman (WL) color-refinement procedure, but this comparison does not quantify the resource parameters a network needs to realize color refinement with bounded-size messages and finite numerical precision. We study the cost of simulating a single color-refinement step on unattributed graphs. We distinguish input-independent, or oblivious, simulation from instance-dependent simulation. In the former, the parameters, or their distributions in randomized models, are fixed before the input instance is known. Our results show that the local form of WL color refinement hides a global relabeling problem. In the oblivious setting, deterministic and zero-error randomized MPGNNs cannot solve this problem in the worst case using only shallow networks with small messages. We complement this lower bound with a nearly matching construction in a stronger rooted, port-aware model. By contrast, when the color set is large, bounded-error randomness can greatly reduce the cost, and a one-layer MPGNN with messages of logarithmic size and a logarithmic number of random bits suffices. We show that this logarithmic number of random bits is essentially necessary for shallow, small-message simulations. When the color set is small, we still obtain a rooted, port-aware simulation, but this construction requires more layers or larger messages. We also prove that this extra cost is partly unavoidable, as small color sets force a nontrivial trade-off between the number of layers and the message size. Finally, instance-dependent simulation can be much shallower, but the required instance-specific parameters are not necessarily easy to find. Together, these results reveal quantitative structure hidden behind the statement that MPGNNs match WL color refinement.
\end{abstract}

\section{Introduction}
Graph neural networks (GNNs) are a central class of models in graph machine learning, with strong empirical performance in applications such as recommendation systems~\cite{ying2018graph, wang2019neural, he2020simplifying}, weather forecasting~\cite{lam2023learning}, and drug discovery~\cite{gilmer2017neural, xiong2019pushing}. 
Alongside these empirical successes, a major theoretical question is to characterize their expressive power. 
A standard approach is to compare GNNs with the Weisfeiler-Lehman (WL) color-refinement procedure~\cite{weisfeiler1968reduction}. 
In particular, Xu et al.~\cite{xu2019how} showed that message-passing GNNs (MPGNNs) are at most as powerful as the $1$-WL test and, over countable feature spaces, can attain this upper bound under suitable injectivity assumptions.
This perspective has motivated a broad range of more expressive architectures, including higher-order and subgraph-based models~\cite{morris2019weisfeiler, maron2019provably, geerts2022expressiveness, feng2023extending, alsentzer2020subgraph, cotta2021reconstruction, papp2021drop, feng2022how, frasca2022understanding, zhou2023from, zhang2023rethinking, zhou2023distance}.
What these results do not quantify, however, is the resource cost of simulating WL color refinement. 
In particular, they do not specify the depth, message width, or numerical precision required for an input-independent MPGNN to implement even a single color-refinement step across many inputs. 

To study this gap, we focus on a single color-refinement step.
This focus is natural because the full $1$-WL procedure is obtained by repeatedly applying this refinement rule.
In one step, each node forms its WL-type, namely its current color together with the multiset of colors of its neighbors.
At first glance, simulating this step using MPGNNs appears straightforward, since the relevant information is local and injective multiset encodings are known to exist in the countable-feature setting~\cite{xu2019how}.
The subtle part is the exact, globally consistent relabeling.
Nodes with equal WL-types receive the same new color, and distinct WL-types occurring in the current input receive distinct new colors.
For a fixed graph and a fixed iteration, this relabeling can be chosen after observing the WL-types that occur.
In contrast, an oblivious MPGNN must use a parameter-selection rule fixed before the input instance is known.
We restrict attention to input colorings that arise along standard $1$-WL trajectories on unattributed graphs, up to relabeling of color names.
We therefore ask what resources are required for an MPGNN to simulate\footnote{Here and throughout, ``simulation'' means recovery of the next WL color-refinement partition, up to a relabeling of color names.} one color-refinement step.

To make this question precise, we distinguish two notions of simulation.
In \emph{oblivious} or input-independent simulation, all parameter-selection rules are fixed before the input instance is known.
Thus, deterministic MPGNNs use a single shared parameter choice, while randomized MPGNNs use an input-independent distribution over parameter choices.
For randomized MPGNNs, success is evaluated separately for each fixed input instance over the sampled seed; one sampled parameter choice need not work simultaneously for all instances.
In \emph{instance-dependent} simulation, by contrast, the parameter choice, or, in the randomized case, the distribution over parameter choices, may depend on the particular input instance.
This distinction can also be viewed through the lens of statistical learning theory~\cite{vapnik2000nature, shai2014understanding, mohri2018foundations}. 
The oblivious setting is analogous to inductive inference, where a fixed learned rule is evaluated on unseen instances, as in GraphSAGE~\cite{hamilton2017inductive}. 
It therefore corresponds to measuring generalization error. 
By contrast, the instance-dependent setting is closer in spirit to transductive inference, where the goal is to optimize performance on the observed instances; see, e.g.,~\cite{joachims1999transductive}. 
It therefore corresponds more closely to measuring whether the model can represent the target function class on the given instances, or whether it can achieve small empirical error.

All results proved or discussed in this paper are summarized in Table~\ref{tab:results}. 
In the oblivious setting, we show that the apparent locality of a single WL color-refinement step conceals a global relabeling problem. 
Any deterministic or zero-error randomized oblivious MPGNN must, in the worst case, either use large depth or transmit enough information globally (Theorem~\ref{thm:deterministic-lower}), while a nearly matching upper bound is achievable in the stronger rooted, port-aware model (Theorem~\ref{thm:deterministic-upper}).
The bounded-error randomized setting is qualitatively different. 
When the available color set is sufficiently large, hashing enables a one-layer oblivious MPGNN with logarithmic bandwidth and logarithmic or polylogarithmic randomness to simulate one refinement step with high probability (Lemma~\ref{lem:randomized-upper-aamand}~\cite{aamand2022exponentially}, Theorem~\ref{thm:randomized-upper-large-color}). 
We further show that a logarithmic amount of randomness is essentially necessary for shallow, low-communication simulations (Theorem~\ref{thm:randomized-lower-randomness}). 
When the color set is only linearly large, however, a nontrivial depth-bandwidth trade-off reappears (Theorem~\ref{thm:randomized-lower-small-color}).
Finally, in the instance-dependent setting, we show that shallow representations become possible because the parameters may be chosen after the target instance is known (Theorem~\ref{thm:instance-dependent-representation}). 
However, we prove that finding such instance-specific parameters within our decentralized parameter-adaptation framework still requires global communication in the worst case (Theorem~\ref{thm:instance-dependent-adaptation-lower}).
\begin{table}[ht]
    \centering
    \caption{Summary of results. Here $n$ and $m$ denote the numbers of nodes and edges, respectively; $D$ denotes the diameter; $d$, $w$, and $p$ denote the depth, message width, and message precision of an MPGNN, respectively, as defined in Section~\ref{subsec:MPGNN}; $R$ denotes the number of random bits used; $C$ denotes the size of the color set, as defined in Section~\ref{subsec:setup}; $p_f$ denotes the failure probability; and $r$ denotes the number of adaptation rounds, as defined in Section~\ref{subsec:adaptation}. The parameter $c_0 \ge 2$ depends on $p_f$.}
    \label{tab:results}
    \begin{tabular}{ccc}
        \toprule
        \textbf{Setting} & \textbf{Assumptions} & \textbf{Results} \\
        \midrule
        
        \multirow{3}{*}{\makecell[c]{Oblivious, Deterministic \\ or Zero-Error Randomized}} & $\log C = o(m)$ & $d = \Omega\left(D + \frac{m}{wp}\right)$ (Thm. \ref{thm:deterministic-lower}) \\ \cmidrule{2-3}
         
        & Rooted, Port-Aware MPGNN & \makecell[c]{$d = O\left(D + \frac{m \log C}{wp}\right)$ (Thm. \ref{thm:deterministic-upper})} \\ \midrule
        
        \multirow{12}{*}{\makecell[c]{Oblivious, Bounded-Error \\ Randomized}} & \makecell[c]{$p_f = \Omega\left(\frac{1}{\poly(n)}\right)$, $C = \Theta(n^{c_0})$} & \makecell[c]{$d = 1$, $wp = O(\log n)$, \\$R = O(\log^2 n)$ (Lem. ~\ref{lem:randomized-upper-aamand}, \cite{aamand2022exponentially})} \\ \cmidrule{2-3}
        
        & $C \ge \left\lceil \frac{3n^2}{2p_f}\right\rceil$ & \makecell[c]{$d = 1$, $wp = \Theta(\log C)$, \\$R = \Theta\left(\log \frac{n\log C}{p_f}\right)$ (Thm. \ref{thm:randomized-upper-large-color})} \\ \cmidrule{2-3}

        & Rooted, Port-Aware MPGNN & \makecell[c]{$d = O\left(D + \frac{\log C + n\log(n / p_f)}{wp}\right)$, \\ $R = \Theta\left(\log \frac{n \log C}{p_f}\right)$ (Thm. \ref{thm:randomized-upper-small-color})} \\ \cmidrule{2-3}
        
        & $\log C = o(n)$, $dwp = O(\log C)$ & $R = \Omega\left(\log \frac{n}{p_f \log C}\right)$ (Thm. \ref{thm:randomized-lower-randomness}) \\ \cmidrule{2-3}
        
        & $p_f = \Omega\left(\frac{1}{\poly(n)}\right)$, $C = \Theta(n)$ & $d = \Omega\left(D + \min\set{\log^* n, \frac{n}{wp}}\right)$ (Thm. \ref{thm:randomized-lower-small-color}) \\ \midrule\midrule

        \makecell[c]{Instance-Dependent \\ Representation} & --- & $d = 1$, $wp = \Theta(\log n)$ (Thm. \ref{thm:instance-dependent-representation}) \\ \midrule

        \makecell[c]{Instance-Dependent \\ Adaptation} & $\log C = o(m)$ & $r = \Omega\left(\frac{D}{d} + \frac{m}{dwp}\right)$ (Thm. \ref{thm:instance-dependent-adaptation-lower}) \\
        \bottomrule
    \end{tabular}
\end{table}

\section{Preliminaries and Problem Setup}
To analyze the cost of one color-refinement step, we first fix notation, review the MPGNN model and the $1$-WL color refinement procedure, and then formalize the simulation problem.

\subsection{Notation}
We write $\set{\cdot}$ for sets and $\mset{\cdot}$ for multisets. 
The notation $[n]$ stands for $\set{1, \ldots, n}$. 
Boldface lowercase letters, such as $\bs{x}$, denote vectors, and boldface uppercase letters, such as $\bs{X}$, denote matrices. 
We write $\bs{x}_i$ for the $i$-th coordinate of $\bs{x}$ and $\bs{X}_{i,:}$ for the $i$-th row of $\bs{X}$.
We consider simple undirected graphs $G = (V, E)$, where $E \subseteq \set{\set{u, v} : u, v \in V, u \neq v}$.
Unless stated otherwise, all graphs in this paper are finite, undirected, and connected. 
For a node $u \in V$, let $N_G(u) \coloneqq \set{v \in V : \set{u, v} \in E}$ denote its neighborhood, and write $N(u)$ when $G$ is clear from context. 
We define $\deg(u) \coloneqq |N(u)|$, $n \coloneqq |V|$, and $m \coloneqq |E|$. 
The diameter of $G$ is $D \coloneqq \max\limits_{u, v \in V} \func{dist}_G(u, v)$, where $\func{dist}_G(u, v)$ is the graph distance between $u$ and $v$.

\subsection{Message-Passing Graph Neural Networks}
\label{subsec:MPGNN}
Among the many formulations of GNNs, we use the message-passing GNN (MPGNN) framework from~\cite{gilmer2017neural}. 
For the model class studied here, each layer has the form
\begin{equation}
\label{eqn:message-passing}
\bs{X}^{(\ell+1)}_{u, :} = \UPD^{(\ell)}\left(\bs{X}^{(\ell)}_{u,:}, \mset{\MSG^{(\ell)}\left(\bs{X}^{(\ell)}_{v,:}\right): v \in N(u)}\right),
\end{equation}
where $\MSG^{(\ell)}(\cdot)$ maps a sender state to a message, and $\UPD^{(\ell)}(\cdot,\cdot)$ combines the current node state with the multiset of incoming messages. 
The update function is required to be permutation-invariant in its second argument. 
This formulation makes the communication structure explicit and corresponds to the aggregate-combine viewpoint of Xu et al.~\cite{xu2019how}.
Unless explicitly stated otherwise, this is the anonymous unordered-neighborhood model used throughout the lower-bound statements: nodes have no unique identifiers, no port numbers, and no persistent per-edge states.

In our analysis, the relevant resource measures are communication resources. 
The \textbf{depth} $d$ is the number of message-passing layers. 
The \textbf{width} $w$ is the maximum dimension of a message vector transmitted along an edge\footnote{In this paper, ``width'' refers to message dimension rather than hidden-state dimension. In many standard models the two coincide, but we keep the distinction explicit because our bounds are stated in terms of transmitted information. Our resource bounds only constrain transmitted messages. The constructions may use node and port states large enough to buffer the accumulated information.}, and the \textbf{precision} $p$ is the number of bits used to represent each coordinate of a transmitted message. 
Thus, each edge can transmit at most $wp$ bits per layer, which we call the bandwidth, and at most $dwp$ bits over $d$ layers in one direction.

\subsection{Weisfeiler-Lehman Test}
\label{subsec:WL}
The one-dimensional Weisfeiler-Lehman ($1$-WL) test, also called color refinement, is a classical heuristic for graph isomorphism testing~\cite{weisfeiler1968reduction}. 
Let $G = (V, E)$ be a graph equipped with an initial coloring $\chi_G^{(0)} : V \to [C]$. 
The refinement rule is
\begin{equation}
\label{eqn:WL}
\chi_G^{(\ell+1)}(u) = \HASH^{(\ell)}\left(\chi_G^{(\ell)}(u), \mset{\chi_G^{(\ell)}(v) : v \in N(u)}\right),
\end{equation}
where $\chi_G^{(\ell)}(u)$ is the color of node $u$ after iteration $\ell$. 
The pair $\left(\chi_G^{(\ell)}(u), \mset{\chi_G^{(\ell)}(v) : v \in N(u)}\right)$ is the \emph{WL-type} of $u$ at iteration $\ell$, and $\HASH^{(\ell)}$ is required to be injective on the WL-types that occur at that iteration. 
For unattributed graphs, one typically starts from the constant initial coloring, i.e., $\chi_G^{(0)}(u) = \chi_G^{(0)}(v)$ for all $u, v \in V$. 
Since the WL color-refinement procedure depends only on the partition induced by a coloring, rather than on the particular names of the colors, we write $\chi \equiv \chi'$ if $\chi(u) = \chi(v) \iff \chi'(u) = \chi'(v)$ for all $u, v \in V$.
Accordingly, all simulation guarantees in this paper are stated up to such a relabeling of color names.

\subsection{Problem Setup: Simulating One Color Refinement Step}
\label{subsec:setup}
We focus on one color-refinement step for colorings that can arise at arbitrary iterations of standard $1$-WL on unattributed graphs started from the constant coloring.
Fix a color set $[C]$ large enough to serve as the label space for both input and output colorings.
For a graph $G$, let $\chi_G^{(\ell)}$ denote the coloring after the $\ell$-th $1$-WL iteration.
A pair $(G, \chi)$ is called a \emph{valid input instance} if $\chi : V(G) \to [C]$ and $\chi \equiv \chi_G^{(\ell)}$ for some iteration $\ell \ge 0$.

We regard an MPGNN, together with its input encoding and output decoding, as a parameterized map from colorings to colorings.
Let $\mathcal{P}$ denote the set of all allowable parameter choices of an MPGNN.
For $\theta \in \mathcal{P}$ and a graph $G$, write $\mathcal N_{\theta}(\cdot; G) \colon [C]^{V(G)} \to [C]^{V(G)}$ for the coloring-to-coloring map induced by evaluating the MPGNN with parameters $\theta$ on the edge set of $G$.
Here $G$ is not part of the learned parameters; it specifies the graph on which the same parameter choice is evaluated.
For a valid input instance $(G, \chi)$, define the WL-type of a node $u$ under $\chi$ by $\WLtype_{G,\chi}(u) \coloneqq \left(\chi(u), \mset{\chi(v) : v \in N(u)}\right)$.
We say that the parameter choice $\theta$ \emph{simulates one color-refinement step} on a valid input instance $(G, \chi)$ if $\mathcal N_{\theta}(\chi; G)(u) = \mathcal N_{\theta}(\chi; G)(v) \iff \WLtype_{G,\chi}(u) = \WLtype_{G,\chi}(v)$ for all $u,v \in V(G)$.

For a graph family $\mathcal{G}$, an MPGNN \emph{deterministically obliviously simulates} one color-refinement step on $\mathcal{G}$ over $[C]$ if there exists one parameter choice $\theta^\star \in \mathcal{P}$, chosen independently of the input instance, such that $\theta^\star$ simulates one color-refinement step on every valid input instance $(G, \chi)$ with $G \in \mathcal{G}$.

An MPGNN with $R$ random bits \emph{randomized obliviously simulates} one color-refinement step on $\mathcal G$ over $[C]$ with failure probability at most $p_f$ if there exists an input-independent map $\Phi: \set{0, 1}^R \to \mathcal{P}$, where $\theta_\rho \coloneqq \Phi(\rho)$, such that for every valid input instance $(G, \chi)$ with $G \in \mathcal{G}$,
\begin{equation*}
    \Pr[\rho]{\forall u,v \in V(G): \mathcal{N}_{\theta_\rho}(\chi; G)(u) = \mathcal{N}_{\theta_\rho}(\chi; G)(v) \iff \WLtype_{G,\chi}(u) = \WLtype_{G,\chi}(v)} \ge 1 - p_f.
\end{equation*}
The probability is taken over the input-independent random seed and gives a per-instance guarantee; it does not require one sampled seed or parameter choice to work simultaneously for all valid input instances.
The special case $p_f = 0$ is called \emph{zero-error} simulation.

Consequently, throughout the paper, upper bounds are uniform over all valid input instances in the stated graph family, while lower bounds are worst-case over valid input instances.
Thus, a lower bound may be witnessed by a graph $G \in \mathcal{G}$ and an iteration $\ell$ for which simulating the transition from $\chi_G^{(\ell)}$ to $\chi_G^{(\ell + 1)}$ already requires the stated resources.

By contrast, we say that one color-refinement step is \emph{simulated instance-dependently} on $\mathcal{G}$ over $[C]$ if for every valid input instance $(G, \chi)$ with $G \in \mathcal{G}$, there exists a parameter choice $\theta_{G, \chi} \in \mathcal{P}$, possibly depending on $(G, \chi)$, such that $\theta_{G, \chi}$ simulates one color-refinement step on that instance.
In a randomized instance-dependent variant, the distribution over parameter choices may also depend on $(G, \chi)$.

\section{Oblivious Simulation Results}
In this section, we analyze the hardness of obliviously simulating one color-refinement step with MPGNNs. 
Concretely, we study worst-case trade-offs among depth $d$, width $w$, and precision $p$.

\subsection{The Deterministic and Randomized Zero-Error Cases}
We first consider the deterministic and zero-error randomized settings, where either no randomness is used or randomness is allowed but the failure probability is $0$ for every valid input instance.

One might expect one color-refinement step to be computable by a depth-one MPGNN, because the WL-type of a node depends only on its own color and the multiset of neighboring colors. 
This intuition overlooks the need for an exact, collision-free relabeling of all WL-types that occur into a bounded color set. 
In the oblivious setting, this relabeling may require information to be coordinated across distant parts of the graph. 
The following theorem formalizes this obstruction.
\begin{restatable}{theorem}{DeterLower}
    \label{thm:deterministic-lower}
    For every sufficiently large integer $n$, every integer $m \in [n, n^2]$, and every integer $D \in [8, n]$, there exists a family $\mathcal{G}_{n, m, D}$ of connected graphs, each with $\Theta(n)$ nodes, $\Theta(m)$ edges, and diameter $D$.
    Then, for any color set $[C]$ with $C \ge \max\limits_{G \in \mathcal{G}_{n, m, D}} |V(G)|$\footnote{Kiefer et al.~\cite{kiefer2020iteration} constructed a family of graphs for which color refinement requires $|V(G)| - 1$ rounds to stabilize, and hence requires $|V(G)|$ colors. Therefore, in the oblivious setting, the color set size $C$ should be at least $|V(G)|$.} and $\log C = o(m)$, any deterministic or zero-error randomized oblivious MPGNN that simulates one color-refinement step on $\mathcal{G}_{n, m, D}$ over $[C]$ must satisfy $d = \Omega\left(D + \frac{m}{w p}\right)$.
\end{restatable}
The proof is deferred to Appendix~\ref{app:proof-deterministic-lower}.
The proof idea is to construct a family of graphs in which two distant regions encode an instance $(\bs{a},\bs{b})$ of the Equality problem from communication complexity~\cite{yao1979some,kushilevitz1997communication-Ch3}. 
If an oblivious MPGNN correctly performs exact $1$-WL color refinement on these graphs, then it can be used to construct a zero-error communication protocol for the Equality problem.
Since each edge can transmit at most $wp$ bits per layer, $d$ layers can transmit only $O(dwp)$ bits across the communication bottleneck in each graph in the family.
The zero-error communication lower bound for Equality yields the $\Omega\left(\frac{m}{wp}\right)$ term, while the separation between the relevant graph regions gives the additional $\Omega(D)$ locality term.

The lower bound gives a necessary amount of communication, measured through the product of depth, width, and precision, for oblivious exact simulation. 
We next ask to what extent this lower bound can be matched by an explicit construction.
A natural source of upper-bound algorithms is the CONGEST model~\cite{linial1987distributive, linial1992locality, peleg2000distributed, ghaffari2022distributed} from distributed computing (see Appendix~\ref{app:proof-deterministic-upper} for the definition), which was used by Loukas~\cite{loukas2020what} as a computational abstraction for MPGNNs. 
However, standard CONGEST algorithms often assume unique node identifiers or port-level control. 
Unique identifiers would break the anonymity and permutation equivariance of the standard MPGNN model, and may not generalize well to graphs with unseen identifiers, while port-level control is not available in the unordered-neighborhood update of Equation~\eqref{eqn:message-passing}.

For the upper bound only, we therefore consider a rooted, port-aware relaxation of the anonymous MPGNN model. 
One node receives a special one-bit marker designating it as the root. 
In addition, each node can distinguish its incident edge ports, maintain a hidden state for each port, send messages depending on the corresponding port state, and update each port state using the node state, the current port state, and the message received through that port (see Appendix~\ref{app:proof-deterministic-upper} for a detailed definition).
This relaxation is strictly stronger than the anonymous unordered-neighborhood MPGNN model used in the lower-bound theorem.
The following theorem gives an upper bound for simulating one color-refinement step under this model.
\begin{restatable}{theorem}{DeterUpper}
    \label{thm:deterministic-upper}
    Given any family of graphs with at most $n$ nodes, at most $m$ edges, and diameter at most $D$, where each graph is supplied with an arbitrary designated root marker, and a color set $[C]$ large enough to represent all input and output colors under consideration (e.g., $C \ge n$), there exists an oblivious rooted, port-aware MPGNN with depth $d = O\left(D + \frac{m \log C}{wp}\right)$, sufficiently large node and port state dimensions, and sufficiently expressive message, node-update, and port-update functions, that deterministically simulates one color-refinement step.
\end{restatable}
The proof is deferred to Appendix~\ref{app:proof-deterministic-upper}.
The proof idea is to use the root marker and port states to construct a rooted spanning tree. 
The WL-types that occur are then aggregated to the root, where duplicate types are removed and the distinct types are sorted to assign ranks, which serve as the new color names. 
The resulting type-to-color map is broadcast back to all nodes along the tree, after which each node computes its new color by applying this map to its own WL-type.
This result shows that, in the stronger rooted, port-aware model, the deterministic lower bound is matched up to the $\log C$ factor in the communication term.
Whether a comparable deterministic upper bound can be achieved in the fully anonymous model remains open.

\subsection{The Bounded-Error Randomized Case}
We then turn to bounded-error randomized simulation, where the failure probability may be positive but must be at most $p_f$, leading to a picture quite different from the deterministic and zero-error randomized cases.

Aamand et al.~\cite{aamand2022exponentially} explored this question using universal hashing and pseudorandomness as tools.
We now restate their more efficient construction as Lemma~\ref{lem:randomized-upper-aamand}.
\begin{lemma}[cf.\ Section~3 of~\cite{aamand2022exponentially}]
    \label{lem:randomized-upper-aamand}
    Given any family of graphs with at most $n$ nodes and a failure probability $p_f \in (0, 1)$ with $p_f = \Omega\left(\frac{1}{\poly(n)}\right)$, there exists an MPGNN with depth $d = 1$ and bandwidth $wp = O(\log n)$ that simulates one color-refinement step with probability at least $1 - p_f$, using $R = O(\log^2 n)$ random bits and a color set $[C]$ of size $C = \Theta(n^{c_0})$ for some constant $c_0$ depending on $p_f$\footnote{For their hashing-based construction, ensuring that, with probability at least $1-p_f$, no two distinct WL-types that occur collide requires a range of size on the order of $\frac{n^2}{p_f}$ by a union-bound argument. In particular, this implies $c_0 \ge 2$.}.
\end{lemma}
We describe the construction behind Lemma~\ref{lem:randomized-upper-aamand} in Appendix~\ref{app:introduction-aamand}.
The advantage of their construction is that its local computation can be implemented by ReLU networks, and the authors also analyze lower bounds for it. 
A limitation of that construction is that the required color-set size depends on the failure probability $p_f$, and the constant $c_0$ is difficult to determine precisely.
Motivated by this, we present two alternative constructions in Theorem~\ref{thm:randomized-upper-large-color} and Theorem~\ref{thm:randomized-upper-small-color}.
The first construction gives an upper bound for large color sets in the fully anonymous MPGNN model.
\begin{restatable}{theorem}{RandUpperLarge}
    \label{thm:randomized-upper-large-color}
    Given any family of graphs with at most $n$ nodes, a failure probability $p_f \in (0, 1)$, and a color set $[C]$ with $C \ge \left\lceil \frac{3n^2}{2p_f}\right\rceil$, there exists an MPGNN with depth $d = 1$, $wp \ge \left\lceil \log_2 C\right\rceil$, sufficiently large node state dimension, and sufficiently expressive message and node-update functions, that simulates one color-refinement step with probability at least $1 - p_f$, using $R = \Theta\left(\log \frac{n \log C}{p_f}\right)$ random bits.
\end{restatable}
The second construction gives an upper bound for arbitrary color sets in the rooted, port-aware model.
\begin{restatable}{theorem}{RandUpperSmall}
    \label{thm:randomized-upper-small-color}
    Given any family of graphs with at most $n$ nodes and diameter at most $D$, where each graph is supplied with an arbitrary designated root marker, a failure probability $p_f \in (0, 1)$, a color set $[C]$ with $C \ge n$, there exists a rooted, port-aware MPGNN with depth $d = O\left(D + \frac{\log C + n \log (n / p_f)}{wp}\right)$, sufficiently large node and port state dimensions, and sufficiently expressive message, node-update, and port-update functions, that simulates one color-refinement step with probability at least $1 - p_f$, using $R = \Theta\left(\log \frac{n \log C}{p_f}\right)$ random bits.
\end{restatable}
The proofs of Theorem~\ref{thm:randomized-upper-large-color} and Theorem~\ref{thm:randomized-upper-small-color} are deferred to Appendix~\ref{app:proof-randomized-upper}.
The proofs mainly rely on almost universal hashing~\cite{carter1977universal, alon1992simple}. 
Once each node has formed its WL-type, it can hash this type to a range $[M]$ of size $M = \Theta\left(\frac{n^2}{p_f}\right)$. 
By a union bound over all pairs among the at most $n$ WL-types that occur, the probability that any two distinct WL-types that occur collide is at most $p_f$.
If the size of the color set is at least $\Theta\left(\frac{n^2}{p_f}\right)$, then the hash value itself can be treated as the new color, and no communication beyond forming the WL-type is needed.
When $C$ is below the $\Theta\left(\frac{n^2}{p_f}\right)$ threshold, we additionally apply a construction similar to Theorem~\ref{thm:deterministic-upper} to sort the hash values and use their ranks as the new colors.
In the first construction (Theorem~\ref{thm:randomized-upper-large-color}), consider the practical setting where $p_f = \Omega\left(\frac{1}{\poly(n)}\right)$ and $\left\lceil \frac{3n^2}{2p_f}\right\rceil \le C \le \poly(n)$.
If $wp = k\log n$ for a sufficiently large constant $k > 0$, then $d = 1$.
In this regime, the MPGNN depth and per-layer bandwidth $wp$ are comparable to those in Lemma~\ref{lem:randomized-upper-aamand}, but we reduce the number of random bits from $O(\log^2 n)$ to $\Theta\left(\log \frac{n\log C}{p_f}\right) = \Theta(\log n)$, and we can explicitly characterize the required color-set size.
In the second construction (Theorem~\ref{thm:randomized-upper-small-color}), the depth and total communication requirements are larger, but the color-set size can be any $C\ge n$ and no longer depends on $p_f$.

Both the construction of Aamand et al.~\cite{aamand2022exponentially} and our large-color construction achieve low total communication $dwp$.
However, it remains unclear whether the number of random bits required by such constructions is close to optimal. 
In Theorem~\ref{thm:randomized-lower-randomness} below, we prove a lower bound on the required number of random bits.
\begin{restatable}{theorem}{RandLowerRandomness}
    \label{thm:randomized-lower-randomness}
    For every sufficiently large integer $n$, there exists a family $\mathcal{G}_{n}$ of connected graphs, each with $\Theta(n)$ nodes.
    Then, for any failure probability $p_f \in (0, 1/9)$ and any color set $[C]$ with $C \ge \max\limits_{G \in \mathcal{G}_{n}} |V(G)|$ and $\log C = o(n)$, any randomized oblivious MPGNN with $dwp = O(\log C)$ that simulates one color-refinement step on $\mathcal{G}_{n}$ over $[C]$ with probability at least $1 - p_f$ must use $R = \Omega\left(\log \frac{n}{p_f \log C}\right)$ random bits.
\end{restatable}
The proof of Theorem~\ref{thm:randomized-lower-randomness} is deferred to Appendix~\ref{app:proof-randomized-lower-randomness}.
The proof again reduces the Equality problem to simulating a single color-refinement step, and then applies the lower bound from~\cite{jacobs2025communication} on the communication complexity of the Equality problem in terms of the error probability $p_f$ and the number of random bits $R$.
When instantiated with $wp = O(\log C)$, the construction in Theorem~\ref{thm:randomized-upper-large-color} has $dwp = O(\log C)$. 
Therefore, its number of random bits is near-optimal in the regime $\log C = o(n)$, with only an additive gap of $O(\log\log C)$.

When the allowed color-set size is below the $\Theta\left(\frac{n^2}{p_f}\right)$ threshold required by Theorem~\ref{thm:randomized-upper-large-color}, Theorem~\ref{thm:randomized-upper-small-color} provides an alternative upper bound on the trade-off among depth, width, and precision.
A natural question is whether this upper bound can be improved under such a restricted color space.
In Theorem~\ref{thm:randomized-lower-small-color} below, we show that when $C = \Theta(n)$, any randomized oblivious MPGNN that simulates one color-refinement step on the hard family must have depth $d = \Omega\left(D + \min\set{\log^* n,\frac{n}{wp}}\right)$.
\begin{restatable}{theorem}{RandLowerSmallColor}
    \label{thm:randomized-lower-small-color}
    For every sufficiently large integer $n$ and every integer $D \in [8,n]$, there exists a family $\mathcal{G}_{n,D}$ of connected graphs, each with $\Theta(n)$ nodes and diameter $\Theta(D)$.
    Then, for any failure probability $p_f \in (0, 1/6)$ satisfying $p_f = \Omega\left(\frac{1}{\poly(n)}\right)$, and any color set $[C]$ with $C \ge \max\limits_{G \in \mathcal{G}_{n, D}} |V(G)|$ and $C = \Theta(n)$, any randomized oblivious MPGNN that simulates one color-refinement step on $\mathcal{G}_{n, D}$ over $[C]$ with probability at least $1 - p_f$ must satisfy $d = \Omega\left(D + \min\left\{\log^* n, \frac{n}{wp}\right\}\right)$.
\end{restatable}
The proof of Theorem~\ref{thm:randomized-lower-small-color} is deferred to Appendix~\ref{app:proof-randomized-lower-small-color}.
The proof reduces the $\func{ExistsEqual}$ problem (see, e.g.,~\cite{huang2020communication}) to simulating one color-refinement step with success probability at least $1 - p_f$.

To summarize, bounded-error randomness leads to a markedly different picture from the deterministic and zero-error regimes. 
When a sufficiently large polynomial-size color set is available, for example when $C=\poly(n)$, $C = \Omega\left(\frac{n^2}{p_f}\right)$, and $p_f = \Omega\left(\frac{1}{\poly(n)}\right)$, one color-refinement step can be simulated by an oblivious MPGNN with $d = 1$, $wp = O(\log n)$, and $R = \Theta(\log n)$ random bits. 
At the minimum color set size $C = \Theta(n)$, however, for $p_f = \Omega\left(\frac{1}{\poly(n)}\right)$, Theorem~\ref{thm:randomized-lower-small-color} rules out constant-depth simulations with logarithmic per-layer bandwidth $wp = \Theta(\log n)$ in the anonymous MPGNN model.
This highlights that bounded-error randomness can dramatically reduce the cost of WL simulation in the large-color regime, while small color sets still impose a nontrivial communication-depth trade-off.

We also note that the lower-bound proofs in this section do not rely on anonymity or unordered neighborhood aggregation in the basic MPGNN model.
They use only two constraints that remain valid in the rooted, port-aware strengthening used for our upper bounds: each edge transmits at most $O(wp)$ bits per layer, and a depth-$d$ message-passing computation has locality radius at most $d$.
Consequently, the lower bounds in this section also hold for the rooted, port-aware model.

\section{Instance-Dependent Simulation Results}
In the previous section, we studied the hardness of simulating one color-refinement step with MPGNNs in the oblivious setting.
In this section, we remove this input-independence requirement and study instance-dependent simulation.

We separate the issue into two questions.
First, if the parameters may be chosen by an oracle after seeing the target instance, can a shallow MPGNN represent one WL color-refinement step on that instance?
Second, if such parameters exist, can they be found efficiently by a local target-instance adaptation procedure?
The answer to the first question is positive: shallow instance-specific parameters always exist.
The answer to the second question is negative in the worst case: under a decentralized parameter-adaptation model, finding such parameters still requires global communication.

\subsection{Instance-Dependent Representation}
We first show that, once input-independence is removed, the representational difficulty of one color-refinement step essentially disappears. 
\begin{restatable}{theorem}{InstDepRep}
\label{thm:instance-dependent-representation}
    Given any valid input instance $(G, \chi)$, where $G$ is a graph with $n$ nodes, any color set $[C]$ with $C \ge n$, and any width $w$ and precision $p$ satisfying $wp \ge \lceil \log_2 n\rceil$, there exists an instance-dependent parameter choice $\theta_{G, \chi}$ for a deterministic depth-one MPGNN with width $w$, precision $p$, sufficiently large hidden-state dimension, and instance-dependent embedding, message, update, and decoding functions, such that the resulting MPGNN simulates one color-refinement step on the fixed instance $(G, \chi)$, that is, for all $u, v\in V(G)$, $\mathcal{N}_{\theta_{G, \chi}}(\chi; G)(u) =\mathcal{N}_{\theta_{G, \chi}}(\chi; G)(v) \iff \WLtype_{G, \chi}(u)=\WLtype_{G, \chi}(v)$.
\end{restatable}
The proof of Theorem~\ref{thm:instance-dependent-representation} is deferred to Appendix~\ref{app:proof-instance-dependent-representation}.
The proof first compresses each input color injectively into $\lceil \log_2 n\rceil$-bit codes, so that each node can transmit its compressed color to all neighbors in one layer.
Each node then forms a canonical vectorization of its compressed WL-type.
Finally, because at most $n$ compressed WL-types occur and $C \ge n$, an instance-dependent decoder can injectively map the occurring WL-types to colors in $[C]$.

\subsection{Instance-Dependent Adaptation}
\label{subsec:adaptation}
Theorem~\ref{thm:instance-dependent-representation} shows that, once an input instance is fixed, there exist instance-dependent parameters for a shallow network that simulate the desired color-refinement step. 
However, this existential statement does not explain how such parameters can be found. 
We therefore consider a decentralized parameter-adaptation framework, in which the parameters are adapted through communication over the target instance. 
Related settings have been studied in distributed optimization; see, e.g.,~\cite{li2020communication}.

The framework starts from target-independent initial parameter $\theta_0$, which may be fixed, hand-designed, randomly sampled from a target-independent distribution, or pretrained on data independent of the target family.
The same initialization rule and the same adaptation algorithm are used for every target instance in the family.

Given a target instance $(G,\chi)$, each node initially knows only its own input color, its incident edges or ports, and the target-independent initialization.
The adaptation algorithm has no centralized access to the whole graph and may use target-dependent information only if it is computed through communication along the edges of $G$.
In particular, any loss, gradient, collision statistic, or other target-dependent statistic used by the adaptation rule must be computed by such graph-local communication.

One adaptation round may perform arbitrary local computation and communicate through the target graph in a way consistent with a forward-backward computation of a depth-$d$, width-$w$, precision-$p$ MPGNN.
Thus, in one round, target-dependent information can cross each fixed edge by at most $O(dwp)$ bits and can propagate by at most $O(d)$ hops.
An $r$-round graph-local adaptation algorithm produces an adapted configuration $\Gamma_{r; G, \chi} = \alg{Adapt}^{(r)}(\theta_0, G, \chi)$.
The adapted configuration may contain target-dependent parameters, node-local states, or other auxiliary local variables.
These objects need not be globally shared.
A node may use only the part of $\Gamma_{r; G, \chi}$ stored locally or obtained through subsequent message passing.
Thus, a target-dependent object cannot become available at distant nodes unless the information needed to construct it has been communicated through the graph.
The desired refined color labels are not provided during adaptation, so the adaptation procedure is unsupervised or self-supervised rather than supervised.

After adaptation, the final prediction is produced by a depth-$d$, width-$w$, precision-$p$ message-passing computation initialized with $\Gamma_{r; G, \chi}$, which can be denoted as $\chi' = \mathcal N_{\Gamma_{r; G, \chi}}(\chi; G)$.
During this final inference step, each edge can transmit at most $wp$ bits per layer in each direction.
The framework succeeds on a valid input instance $(G,\chi)$ if the final output coloring satisfies $\chi'(u)=\chi'(v) \iff \WLtype_{G,\chi}(u) = \WLtype_{G,\chi}(v)$ for all $u, v\in V(G)$.

A family-level guarantee in this setting should not be confused with oblivious simulation.
Here, the initialization rule and adaptation algorithm are fixed for the whole family, but once the target instance $(G,\chi)$ is revealed, the algorithm may produce parameters specialized to that instance.
Thus the adapted parameters may vary across target graphs.
By contrast, in oblivious simulation, a single parameter setting, or a single distribution over parameters, must be fixed before the target graph is known and must work for every instance in the family without further modification.
The oblivious setting therefore corresponds to inductive learning with worst-case generalization, whereas the instance-dependent setting corresponds to transductive adaptation on the target instance.

We next prove that, although a decentralized parameter-adaptation framework can enable MPGNNs smaller than those allowed by the oblivious lower bound to simulate one color-refinement step, the number of adaptation rounds required is still subject to a lower bound.
\begin{restatable}{theorem}{InstDepAdaLower}
    \label{thm:instance-dependent-adaptation-lower}
    For every sufficiently large integer $n$, every integer $m \in [n,n^2]$, and every integer $D \in [8, n]$, there exists a family $\mathcal{G}_{n, m, D}$ of connected graphs, each with $\Theta(n)$ nodes, $\Theta(m)$ edges, and diameter $D$.
    For any color set $[C]$ with $C \ge \max\limits_{G \in \mathcal{G}_{n,m,D}} |V(G)|$ and $\log C = o(m)$, any deterministic\footnote{The oblivious randomized construction from Theorem~\ref{thm:randomized-upper-large-color} already gives a shallow simulator with no target-specific adaptation, so in that regime one can take $r = 0$.} decentralized parameter-adaptation framework that uses $r$ adaptation rounds to adapt the parameters of a depth-$d$, width-$w$, precision-$p$ MPGNN, so that for every valid input instance $(G, \chi)$ with $G \in \mathcal G_{n, m, D}$ the resulting MPGNN simulates one color-refinement step on $(G,\chi)$, must satisfy $r =\Omega\left(\frac{D}{d} + \frac{m}{dwp}\right)$.
\end{restatable}
The proof is deferred to Appendix~\ref{app:proof-instance-dependent-adaptation-lower}.
The proof idea is that, if the decentralized parameter-adaptation framework can adapt a network's parameters so that the network simulates one color-refinement step, then it can be applied repeatedly to solve the deterministic Equality problem on the hard graphs from Theorem~\ref{thm:deterministic-lower}, yielding the desired lower bound.

\section{Discussion and Limitations}
Our paper shows that the usual qualitative comparison between MPGNNs and $1$-WL conceals a quantitative relabeling problem.
Even a single refinement step can require global coordination under oblivious deterministic simulation, whereas bounded-error hashing can remove much of this cost when the color space is sufficiently large.
We also give results on representation and parameter-adaptation in the instance-dependent setting.
For instance-dependent representation, a shallow network suffices.
However, the number of rounds required for decentralized parameter-adaptation still has a nontrivial lower bound.
Taken together, these results show that the familiar statement that ``MPGNNs are as powerful as $1$-WL'' hides substantial quantitative structure.

Meanwhile, our results have several limitations.
First, some of our upper bounds in the oblivious setting are proved in a rooted, port-aware relaxation, and whether comparable bounds can be obtained in the fully anonymous unordered-neighborhood MPGNN model remains open.
Second, in the randomized setting, the intermediate regime between $C = \Theta(n)$ and $C = \Theta\left(\frac{n^2}{p_f}\right)$ is not fully characterized.
Third, our upper bounds allow large local memory and expressive local computations; incorporating explicit constraints on local state size or on implementability by standard neural architectures is an important direction for future work.
Fourth, our lower bounds are worst-case statements, and it remains unclear whether comparable average-case hardness holds under natural or real-world graph distributions.
Finally, the lower bound for instance-dependent adaptation relies on a local adaptation model, and it does not preclude centralized algorithms with global access to the whole target graph.

\bibliography{main}

\begin{thebibliography}{10}

\bibitem{aamand2022exponentially}
Anders Aamand, Justin~Y. Chen, Piotr Indyk, Shyam Narayanan, Ronitt Rubinfeld, Nicholas Schiefer, Sandeep Silwal, and Tal Wagner.
\newblock Exponentially improving the complexity of simulating the weisfeiler-lehman test with graph neural networks.
\newblock In Sanmi Koyejo, S.~Mohamed, A.~Agarwal, Danielle Belgrave, K.~Cho, and A.~Oh, editors, {\em Advances in Neural Information Processing Systems 35: Annual Conference on Neural Information Processing Systems 2022, NeurIPS 2022, New Orleans, LA, USA, November 28 - December 9, 2022}, 2022.

\bibitem{alon1992simple}
Noga Alon, Oded Goldreich, Johan H{\aa}stad, and Ren{\'e} Peralta.
\newblock Simple constructions of almost k-wise independent random variables.
\newblock {\em Random Structures \& Algorithms}, 3(3):289--304, 1992.

\bibitem{alsentzer2020subgraph}
Emily Alsentzer, Samuel~G. Finlayson, Michelle~M. Li, and Marinka Zitnik.
\newblock Subgraph neural networks.
\newblock In Hugo Larochelle, Marc'Aurelio Ranzato, Raia Hadsell, Maria{-}Florina Balcan, and Hsuan{-}Tien Lin, editors, {\em Advances in Neural Information Processing Systems 33: Annual Conference on Neural Information Processing Systems 2020, NeurIPS 2020, December 6-12, 2020, virtual}, 2020.

\bibitem{brody2016certifying}
Joshua Brody, Amit Chakrabarti, Ranganath Kondapally, David~P Woodruff, and Grigory Yaroslavtsev.
\newblock Certifying equality with limited interaction.
\newblock {\em Algorithmica}, 76(3):796--845, 2016.

\bibitem{carter1977universal}
J~Lawrence Carter and Mark~N Wegman.
\newblock Universal classes of hash functions.
\newblock In {\em Proceedings of the ninth annual ACM symposium on Theory of computing}, pages 106--112, 1977.

\bibitem{cotta2021reconstruction}
Leonardo Cotta, Christopher Morris, and Bruno Ribeiro.
\newblock Reconstruction for powerful graph representations.
\newblock In Marc'Aurelio Ranzato, Alina Beygelzimer, Yann~N. Dauphin, Percy Liang, and Jennifer~Wortman Vaughan, editors, {\em Advances in Neural Information Processing Systems 34: Annual Conference on Neural Information Processing Systems 2021, NeurIPS 2021, December 6-14, 2021, virtual}, pages 1713--1726, 2021.

\bibitem{feng2022how}
Jiarui Feng, Yixin Chen, Fuhai Li, Anindya Sarkar, and Muhan Zhang.
\newblock How powerful are k-hop message passing graph neural networks.
\newblock In Sanmi Koyejo, S.~Mohamed, A.~Agarwal, Danielle Belgrave, K.~Cho, and A.~Oh, editors, {\em Advances in Neural Information Processing Systems 35: Annual Conference on Neural Information Processing Systems 2022, NeurIPS 2022, New Orleans, LA, USA, November 28 - December 9, 2022}, 2022.

\bibitem{feng2023extending}
Jiarui Feng, Lecheng Kong, Hao Liu, Dacheng Tao, Fuhai Li, Muhan Zhang, and Yixin Chen.
\newblock Extending the design space of graph neural networks by rethinking folklore weisfeiler-lehman.
\newblock In Alice Oh, Tristan Naumann, Amir Globerson, Kate Saenko, Moritz Hardt, and Sergey Levine, editors, {\em Advances in Neural Information Processing Systems 36: Annual Conference on Neural Information Processing Systems 2023, NeurIPS 2023, New Orleans, LA, USA, December 10 - 16, 2023}, 2023.

\bibitem{frasca2022understanding}
Fabrizio Frasca, Beatrice Bevilacqua, Michael~M. Bronstein, and Haggai Maron.
\newblock Understanding and extending subgraph gnns by rethinking their symmetries.
\newblock In Sanmi Koyejo, S.~Mohamed, A.~Agarwal, Danielle Belgrave, K.~Cho, and A.~Oh, editors, {\em Advances in Neural Information Processing Systems 35: Annual Conference on Neural Information Processing Systems 2022, NeurIPS 2022, New Orleans, LA, USA, November 28 - December 9, 2022}, 2022.

\bibitem{geerts2022expressiveness}
Floris Geerts and Juan~L. Reutter.
\newblock Expressiveness and approximation properties of graph neural networks.
\newblock In {\em The Tenth International Conference on Learning Representations, {ICLR} 2022, Virtual Event, April 25-29, 2022}. OpenReview.net, 2022.

\bibitem{ghaffari2022distributed}
Mohsen Ghaffari.
\newblock Distributed graph algorithms.
\newblock Course Notes for Distributed Algorithms, 2022.
\newblock Accessed: September, 2024.

\bibitem{gilmer2017neural}
Justin Gilmer, Samuel~S. Schoenholz, Patrick~F. Riley, Oriol Vinyals, and George~E. Dahl.
\newblock Neural message passing for quantum chemistry.
\newblock In Doina Precup and Yee~Whye Teh, editors, {\em Proceedings of the 34th International Conference on Machine Learning, {ICML} 2017, Sydney, NSW, Australia, 6-11 August 2017}, volume~70 of {\em Proceedings of Machine Learning Research}, pages 1263--1272. {PMLR}, 2017.

\bibitem{hamilton2017inductive}
William~L. Hamilton, Zhitao Ying, and Jure Leskovec.
\newblock Inductive representation learning on large graphs.
\newblock In Isabelle Guyon, Ulrike von Luxburg, Samy Bengio, Hanna~M. Wallach, Rob Fergus, S.~V.~N. Vishwanathan, and Roman Garnett, editors, {\em Advances in Neural Information Processing Systems 30: Annual Conference on Neural Information Processing Systems 2017, December 4-9, 2017, Long Beach, CA, {USA}}, pages 1024--1034, 2017.

\bibitem{he2020simplifying}
Xiangnan He, Kuan Deng, Xiang Wang, Yan Li, Yong{-}Dong Zhang, and Meng Wang.
\newblock Lightgcn: Simplifying and powering graph convolution network for recommendation.
\newblock In Jimmy~X. Huang, Yi~Chang, Xueqi Cheng, Jaap Kamps, Vanessa Murdock, Ji{-}Rong Wen, and Yiqun Liu, editors, {\em Proceedings of the 43rd International {ACM} {SIGIR} conference on research and development in Information Retrieval, {SIGIR} 2020, Virtual Event, China, July 25-30, 2020}, pages 639--648. {ACM}, 2020.

\bibitem{huang2020communication}
Dawei Huang, Seth Pettie, Yixiang Zhang, and Zhijun Zhang.
\newblock The communication complexity of set intersection and multiple equality testing.
\newblock In {\em Proceedings of the Thirty-First Annual ACM-SIAM Symposium on Discrete Algorithms}, SODA '20, page 1715–1732, USA, 2020. Society for Industrial and Applied Mathematics.

\bibitem{jacobs2025communication}
Dale Jacobs, John Jeang, Vladimir Podolskii, Morgan Prior, and Ilya Volkovich.
\newblock Communication complexity of equality and error-correcting codes.
\newblock In C.~Aiswarya, Ruta Mehta, and Subhajit Roy, editors, {\em 45th IARCS Annual Conference on Foundations of Software Technology and Theoretical Computer Science (FSTTCS 2025)}, volume 360 of {\em Leibniz International Proceedings in Informatics (LIPIcs)}, pages 37:1--37:19, 2025.

\bibitem{joachims1999transductive}
Thorsten Joachims.
\newblock Transductive inference for text classification using support vector machines.
\newblock In Ivan Bratko and Saso Dzeroski, editors, {\em Proceedings of the Sixteenth International Conference on Machine Learning {(ICML} 1999), Bled, Slovenia, June 27 - 30, 1999}, pages 200--209. Morgan Kaufmann, 1999.

\bibitem{kiefer2020iteration}
Sandra Kiefer and Brendan~D. McKay.
\newblock The iteration number of colour refinement.
\newblock In Artur Czumaj, Anuj Dawar, and Emanuela Merelli, editors, {\em 47th International Colloquium on Automata, Languages, and Programming, {ICALP} 2020, Saarbr{\"{u}}cken, Germany (Virtual Conference), July 8-11, 2020}, LIPIcs, pages 73:1--73:19. Schloss Dagstuhl - Leibniz-Zentrum f{\"{u}}r Informatik, 2020.

\bibitem{kushilevitz1997communication-Ch3}
Eyal Kushilevitz and Noam Nisan.
\newblock {\em Communication complexity}.
\newblock Cambridge University Press, 1997.

\bibitem{lam2023learning}
Remi Lam, Alvaro Sanchez-Gonzalez, Matthew Willson, Peter Wirnsberger, Meire Fortunato, Ferran Alet, Suman Ravuri, Timo Ewalds, Zach Eaton-Rosen, Weihua Hu, et~al.
\newblock Learning skillful medium-range global weather forecasting.
\newblock {\em Science}, 382(6677):1416--1421, 2023.

\bibitem{li2020communication}
Boyue Li, Shicong Cen, Yuxin Chen, and Yuejie Chi.
\newblock Communication-efficient distributed optimization in networks with gradient tracking and variance reduction.
\newblock In Silvia Chiappa and Roberto Calandra, editors, {\em The 23rd International Conference on Artificial Intelligence and Statistics, {AISTATS} 2020, 26-28 August 2020, Online [Palermo, Sicily, Italy]}, Proceedings of Machine Learning Research, pages 1662--1672. {PMLR}, 2020.

\bibitem{linial1987distributive}
Nathan Linial.
\newblock Distributive graph algorithms-global solutions from local data.
\newblock In {\em 28th Annual Symposium on Foundations of Computer Science, Los Angeles, California, USA, 27-29 October 1987}, pages 331--335. {IEEE} Computer Society, 1987.

\bibitem{linial1992locality}
Nathan Linial.
\newblock Locality in distributed graph algorithms.
\newblock {\em {SIAM} J. Comput.}, 21(1):193--201, 1992.

\bibitem{loukas2020what}
Andreas Loukas.
\newblock What graph neural networks cannot learn: depth vs width.
\newblock In {\em 8th International Conference on Learning Representations, {ICLR} 2020, Addis Ababa, Ethiopia, April 26-30, 2020}. OpenReview.net, 2020.

\bibitem{maron2019provably}
Haggai Maron, Heli Ben{-}Hamu, Hadar Serviansky, and Yaron Lipman.
\newblock Provably powerful graph networks.
\newblock In Hanna~M. Wallach, Hugo Larochelle, Alina Beygelzimer, Florence d'Alch{\'{e}}{-}Buc, Emily~B. Fox, and Roman Garnett, editors, {\em Advances in Neural Information Processing Systems 32: Annual Conference on Neural Information Processing Systems 2019, NeurIPS 2019, December 8-14, 2019, Vancouver, BC, Canada}, pages 2153--2164, 2019.

\bibitem{mohri2018foundations}
Mehryar Mohri, Afshin Rostamizadeh, and Ameet Talwalkar.
\newblock {\em Foundations of machine learning}.
\newblock MIT press, 2018.

\bibitem{morris2019weisfeiler}
Christopher Morris, Martin Ritzert, Matthias Fey, William~L. Hamilton, Jan~Eric Lenssen, Gaurav Rattan, and Martin Grohe.
\newblock Weisfeiler and leman go neural: Higher-order graph neural networks.
\newblock In {\em The Thirty-Third {AAAI} Conference on Artificial Intelligence, {AAAI} 2019, The Thirty-First Innovative Applications of Artificial Intelligence Conference, {IAAI} 2019, The Ninth {AAAI} Symposium on Educational Advances in Artificial Intelligence, {EAAI} 2019, Honolulu, Hawaii, USA, January 27 - February 1, 2019}, pages 4602--4609. {AAAI} Press, 2019.

\bibitem{papp2021drop}
P{\'{a}}l~Andr{\'{a}}s Papp, Karolis Martinkus, Lukas Faber, and Roger Wattenhofer.
\newblock Dropgnn: Random dropouts increase the expressiveness of graph neural networks.
\newblock In Marc'Aurelio Ranzato, Alina Beygelzimer, Yann~N. Dauphin, Percy Liang, and Jennifer~Wortman Vaughan, editors, {\em Advances in Neural Information Processing Systems 34: Annual Conference on Neural Information Processing Systems 2021, NeurIPS 2021, December 6-14, 2021, virtual}, pages 21997--22009, 2021.

\bibitem{peleg2000distributed}
David Peleg.
\newblock {\em Distributed computing: a locality-sensitive approach}.
\newblock Society for Industrial and Applied Mathematics, USA, 2000.

\bibitem{saglam2013communication}
Mert Saglam and G{\'a}bor Tardos.
\newblock On the communication complexity of sparse set disjointness and exists-equal problems.
\newblock In {\em 2013 IEEE 54th Annual Symposium on Foundations of Computer Science}, pages 678--687. IEEE, 2013.

\bibitem{shai2014understanding}
Shai Shalev{-}Shwartz and Shai Ben{-}David.
\newblock {\em Understanding Machine Learning - From Theory to Algorithms}.
\newblock Cambridge University Press, 2014.

\bibitem{vapnik2000nature}
Vladimir~Naumovich Vapnik.
\newblock {\em The Nature of Statistical Learning Theory, Second Edition}.
\newblock Statistics for Engineering and Information Science. Springer, 2000.

\bibitem{wang2019neural}
Xiang Wang, Xiangnan He, Meng Wang, Fuli Feng, and Tat{-}Seng Chua.
\newblock Neural graph collaborative filtering.
\newblock In Benjamin Piwowarski, Max Chevalier, {\'{E}}ric Gaussier, Yoelle Maarek, Jian{-}Yun Nie, and Falk Scholer, editors, {\em Proceedings of the 42nd International {ACM} {SIGIR} Conference on Research and Development in Information Retrieval, {SIGIR} 2019, Paris, France, July 21-25, 2019}, pages 165--174. {ACM}, 2019.

\bibitem{weisfeiler1968reduction}
Boris Weisfeiler and Andrei Leman.
\newblock The reduction of a graph to canonical form and the algebra which appears therein.
\newblock {\em nti, Series}, 2(9):12--16, 1968.

\bibitem{xiong2019pushing}
Zhaoping Xiong, Dingyan Wang, Xiaohong Liu, Feisheng Zhong, Xiaozhe Wan, Xutong Li, Zhaojun Li, Xiaomin Luo, Kaixian Chen, Hualiang Jiang, et~al.
\newblock Pushing the boundaries of molecular representation for drug discovery with the graph attention mechanism.
\newblock {\em Journal of medicinal chemistry}, 63(16):8749--8760, 2019.

\bibitem{xu2019how}
Keyulu Xu, Weihua Hu, Jure Leskovec, and Stefanie Jegelka.
\newblock How powerful are graph neural networks?
\newblock In {\em 7th International Conference on Learning Representations, {ICLR} 2019, New Orleans, LA, USA, May 6-9, 2019}. OpenReview.net, 2019.

\bibitem{yao1979some}
Andrew Chi-Chih Yao.
\newblock Some complexity questions related to distributive computing (preliminary report).
\newblock In {\em Proceedings of the Eleventh Annual ACM Symposium on Theory of Computing}, STOC '79, pages 209--213, New York, NY, USA, 1979. Association for Computing Machinery.

\bibitem{ying2018graph}
Rex Ying, Ruining He, Kaifeng Chen, Pong Eksombatchai, William~L. Hamilton, and Jure Leskovec.
\newblock Graph convolutional neural networks for web-scale recommender systems.
\newblock In Yike Guo and Faisal Farooq, editors, {\em Proceedings of the 24th {ACM} {SIGKDD} International Conference on Knowledge Discovery {\&} Data Mining, {KDD} 2018, London, UK, August 19-23, 2018}, pages 974--983. {ACM}, 2018.

\bibitem{zhang2023rethinking}
Bohang Zhang, Shengjie Luo, Liwei Wang, and Di~He.
\newblock Rethinking the expressive power of gnns via graph biconnectivity.
\newblock In {\em The Eleventh International Conference on Learning Representations, {ICLR} 2023, Kigali, Rwanda, May 1-5, 2023}. OpenReview.net, 2023.

\bibitem{zhou2023from}
Cai Zhou, Xiyuan Wang, and Muhan Zhang.
\newblock From relational pooling to subgraph gnns: {A} universal framework for more expressive graph neural networks.
\newblock In Andreas Krause, Emma Brunskill, Kyunghyun Cho, Barbara Engelhardt, Sivan Sabato, and Jonathan Scarlett, editors, {\em International Conference on Machine Learning, {ICML} 2023, 23-29 July 2023, Honolulu, Hawaii, {USA}}, volume 202 of {\em Proceedings of Machine Learning Research}, pages 42742--42768. {PMLR}, 2023.

\bibitem{zhou2023distance}
Junru Zhou, Jiarui Feng, Xiyuan Wang, and Muhan Zhang.
\newblock Distance-restricted folklore weisfeiler-leman gnns with provable cycle counting power.
\newblock In Alice Oh, Tristan Naumann, Amir Globerson, Kate Saenko, Moritz Hardt, and Sergey Levine, editors, {\em Advances in Neural Information Processing Systems 36: Annual Conference on Neural Information Processing Systems 2023, NeurIPS 2023, New Orleans, LA, USA, December 10 - 16, 2023}, 2023.

\end{thebibliography}
\bibliographystyle{plain}

\newpage
\appendix
\section{Communication Complexity Background and Proof of Theorem~\ref{thm:deterministic-lower}}
\label{app:proof-deterministic-lower}
\subsection{Introduction to Communication Complexity}
The proof of Theorem~\ref{thm:deterministic-lower} relies on tools from communication complexity, so we briefly review the necessary background to make the required concepts self-contained.
Communication complexity was first introduced by Yao~\cite{yao1979some}.
In two-party communication complexity, two parties, Alice and Bob, wish to compute a function $f : X \times Y \to Z$, where $X$ and $Y$ are their input domains, respectively. 
They agree on a strategy beforehand but are separated before receiving their inputs $(x, y) \in X \times Y$. 
They then exchange messages to compute $f(x, y)$, with the goal of minimizing the total number of bits exchanged. 

In deterministic communication, the strategy is fixed, and the minimum number of bits required to compute $f$ in this setting is known as the deterministic communication complexity, denoted by $D(f)$. 
Similarly, in randomized communication, where Alice and Bob can use random bits and a two-sided error of $\epsilon$ is allowed, the minimum number of bits required is the randomized communication complexity.
If the randomness is private, it is denoted by $R_{\epsilon}^{\prv}(f)$, and if it is public, it is denoted by $R_{\epsilon}^{\pub}(f)$.

The Equality (EQ) problem between two $n$-bit strings, denoted by $\func{EQ}_n : \set{0, 1}^n \times \set{0, 1}^n \to \set{0, 1}$, is defined as
\begin{equation*}
    \func{EQ}_n(\bs{x}, \bs{y}) =
    \begin{cases}
    1, & \bs{x} = \bs{y}, \\
    0, & \text{otherwise}.
    \end{cases}
\end{equation*}
It is one of the most well-studied problems in two-party communication complexity.
We summarize its communication complexity under different settings in Table \ref{tab:communication-complexity-EQ} below.
\begin{table}[H]
    \centering
    \caption{Communication complexity of the $\func{EQ}_n$ function under different settings.}
    \label{tab:communication-complexity-EQ}
    \begin{threeparttable}
        \begin{tabular}{c|c|cc|cc}
            \toprule
            \multirow{3}{*}{Function} & \multirow{2}{*}{Deterministic} & \multicolumn{4}{c}{Randomized} \\
            & & \multicolumn{2}{c|}{Private Coin} & \multicolumn{2}{c}{Public Coin} \\
            & $D(\cdot)$ & $R^{\prv}_0(\cdot)$ & $R^{\prv}_{1/3}(\cdot)$ & $R_0^{\pub}(\cdot)$ & $R^{\pub}_{1/3}(\cdot)$ \\
            \midrule
            $\func{EQ}_n$ & $\Theta(n)$\tnote{$\dagger$} & $\Theta(n)$\tnote{$\dagger$} & $\Theta(\log n)$\tnote{$\dagger$} & $\Theta(n)$\tnote{*} & $\Theta(1)$\tnote{$\dagger$} \\
            \bottomrule
        \end{tabular}
        \begin{tablenotes}
            \item[$\dagger$] The proofs can be found in~\cite{kushilevitz1997communication-Ch3}.
            \item[*] This follows from the bound $R_0^{\prv}(f) = O\left(R_0^{\pub}(f) + \log n\right)$ (see Exercise~3.15 in~\cite{kushilevitz1997communication-Ch3}).
        \end{tablenotes}
    \end{threeparttable}
\end{table}

\subsection{Proof of Theorem~\ref{thm:deterministic-lower}}
We now prove Theorem \ref{thm:deterministic-lower}.
\DeterLower*
\begin{proof}
    At a high level, we prove the result by reducing Equality to the problem of simulating WL color refinement.
    To be concrete, for every sufficiently large positive integer $n$, every $m \in [n, n^2]$, every $D \in [8, n]$, and for every $\func{EQ}_m$ instance $(\bs{a}, \bs{b}) \in \set{0, 1}^m \times \set{0, 1}^m$, we construct a connected uncolored graph $G$ with $\Theta(n)$ nodes, $\Theta(m)$ edges, and diameter exactly $D$, together with two distinguished nodes $x^{(A)}$ and $x^{(B)}$ satisfying $\func{dist}_{G}\left(x^{(A)}, x^{(B)}\right) = \Theta(D)$, such that $\chi^{(3)}_{G}\left(x^{(A)}\right) = \chi^{(3)}_{G}\left(x^{(B)}\right) \iff \bs{a} = \bs{b}$.
    Thus, by chaining together three oblivious MPGNNs that simulate WL color refinement, and then comparing the colors of two nodes via communication, we obtain a communication protocol for solving the Equality problem. 
    The communication-complexity lower bound for Equality then yields the desired lower bound on the relevant parameters of the MPGNN.

\paragraph{Construction.}
    The entire construction is divided into four steps. 
    In the first step, we construct a fixed basic graph which is partitioned between Alice ($A$) and Bob ($B$). 
    In the second step, given an input instance $(\bs{a}, \bs{b})$ of $\func{EQ}_m$, we add edges to the basic graph and assign virtual colors to certain nodes, thereby obtaining a virtually colored hard instance.
    In the third step, we further augment the above virtually colored hard instance by adding nodes and edges to decolorize it, thus obtaining an uncolored hard instance.
    In the last step, we add filler nodes so that the resulting graph satisfies the desired size requirement.

    \textbf{\textit{Step One: Fixed Basic Graph.}}
    Let $r \coloneqq \left\lceil \sqrt m\right\rceil$, and $q \coloneqq \left\lceil \frac{m}{r}\right\rceil$ be two positive integers.
    For each side $\sigma \in \set{A, B}$, create nodes $x^{(\sigma)}$, $w^{(\sigma)}_1, \cdots, w^{(\sigma)}_q$, $u^{(\sigma)}_1, \cdots, u^{(\sigma)}_r$, and $v^{(\sigma)}_1, \cdots, v^{(\sigma)}_r$; and add the fixed edges $\set{x^{(\sigma)}, w^{(\sigma)}_i}$ for $i\in[q]$, $\set{x^{(\sigma)}, u^{(\sigma)}_j}$ and $\set{x^{(\sigma)},v^{(\sigma)}_j}$ for all $j\in[r]$. 
    Then, connect the two sides by a path, for example by adding nodes $c_1, \cdots, c_{D-7}$ and edges $\set{x^{(A)}, c_1}, \set{c_1, c_2}, \cdots, \set{c_{D - 7}, x^{(B)}}$\footnote{The distinguished nodes are used only by the reduction to specify which two output colors Alice and Bob compare. They are not given as additional node features to the MPGNN or to the WL procedure.}. 
    
    Next, Alice holds the induced subgraph on the vertices with superscript $(A)$, while Bob holds the induced subgraph on the vertices with superscript $(B)$. 
    We partition the connecting path at an arbitrary edge, assigning all vertices on one side of this edge to Alice and all vertices on the other side to Bob.
    Thus the Alice-Bob cut contains exactly one graph edge.
    Figure~\ref{fig:deterministic-lower-1} below gives an illustration of the fixed basic graph. 
    The edges $\set{x^{(\sigma)}, u^{(\sigma)}_j}$ and $\set{x^{(\sigma)}, v^{(\sigma)}_j}$ are omitted, and the connecting path is simplified.

    \begin{figure*}[ht]
		\centering
		\begin{tikzpicture}[scale = 0.75, transform shape]
			\tikzset{every node/.style = {minimum size = 1.0cm, thick, inner sep = 0pt}}		
			\node[circle, draw] (xA) at (-.8, 0) {$x^{(A)}$};
			\foreach \i in {1, ..., 5}
			{
				\ifnum \i = 4
					\node (wA\i) at (-3, 1.2 * 3 - 1.2 * \i) {$\dots$};
				\else
					\node[circle, draw] (wA\i) at (-3, 1.2 * 3 - 1.2 * \i)
					{
						\ifnum \i < 4 $w^{(A)}_{\i}$ \else $w^{(A)}_{q}$ \fi
					};
				\fi
			}
			\foreach \i in {1, 2, 3, 5}
			{
				\draw[thick] (xA) -- (wA\i);
			}
		    
			\foreach \i in {1, ..., 7} 
			{
				\ifnum \i = 5
					\node (uA\i) at (-3 - 0.8 * \i, 6.5 - 0.8 * \i) {$\dots$};
				\else
					\node[circle, draw] (uA\i) at (-3 - 0.8 * \i, 6.5 - 0.8 * \i)
					{
						\ifnum \i < 5 $u^{(A)}_{\i}$ \else
						\ifnum \i = 6 $u^{(A)}_{r - 1}$ \else $u^{(A)}_{r}$ \fi \fi
					};
				\fi
    		}
		
			\foreach \i in {1, ..., 7} 
			{

				\ifnum \i = 5
					\node (vA\i) at (-3 - 0.8 * \i, -6.5 + 0.8 * \i) {$\dots$};
				\else
					\node[circle, draw] (vA\i) at (-3 - 0.8 * \i, -6.5 + 0.8 * \i)
					{
						\ifnum \i < 5 $v^{(A)}_{\i}$ \else
						\ifnum \i = 6 $v^{(A)}_{r - 1}$ \else $v^{(A)}_{r}$ \fi \fi
					};
				\fi
			}
		
			\node[circle, draw] (xB) at (.8, 0) {$x^{(B)}$};
			\foreach \i in {1, ..., 5}
			{
				\ifnum \i = 4
					\node (wB\i) at (3, 1.2 * 3 - 1.2 * \i) {$\dots$};
				\else
					\node[circle, draw] (wB\i) at (3, 1.2 * 3 - 1.2 * \i)
					{
						\ifnum \i < 4 $w^{(B)}_{\i}$ \else $w^{(B)}_{q}$ \fi
					};
				\fi
			}
			\foreach \i in {1, 2, 3, 5}
			{
				\draw[thick] (xB) -- (wB\i);
			}
		
			\foreach \i in {1, ..., 7}
			{
				\ifnum \i = 5
					\node (uB\i) at (3 + 0.8 * \i, 6.5 - 0.8 * \i) {$\dots$};
				\else
					\node[circle, draw,] (uB\i) at (3 + 0.8 * \i, 6.5 - 0.8 * \i)
					{
						\ifnum \i < 5 $u^{(B)}_{\i}$ \else
						\ifnum \i = 6 $u^{(B)}_{r - 1}$ \else $u^{(B)}_{r}$ \fi \fi
					};
				\fi
			}
		
			\foreach \i in {1, ..., 7}
			{
				\ifnum \i = 5
					\node (vB\i) at (3 + 0.8 * \i, -6.5 + 0.8 * \i) {$\dots$};
				\else
					\node[circle, draw] (vB\i) at (3 + 0.8 * \i, -6.5 + 0.8 * \i)
					{%
						\ifnum \i < 5 $v^{(B)}_{\i}$ \else
						\ifnum \i = 6 $v^{(B)}_{r - 1}$ \else $v^{(B)}_{r}$ \fi \fi
					};
				\fi
			}
		
			\draw[thick, decorate, decoration={snake, amplitude=1mm, segment length = 2mm}] (xA) -- (xB);
			\draw [dashed, very thick] (0, 7) -- (0, -7);

			\node[anchor = south east] at (-1, 6.) {\Large Alice};
			\node[anchor = south west] at (1, 6.) {\Large Bob};
		\end{tikzpicture}
		\caption{The constructed basic graph. The edges $\set{x^{(\sigma)}, u^{(\sigma)}_j}$ and $\set{x^{(\sigma)}, v^{(\sigma)}_j}$ are omitted, and the connecting path is simplified.}
		\label{fig:deterministic-lower-1}
	\end{figure*}

    \textbf{\textit{Step Two: Virtually Colored Hard Instance.}}
    We now encode an instance of $\func{EQ}_m$ and assign virtual colors to selected nodes.
    Fix an injection $\tau:[m]\to [q] \times[r]$ by $\tau(k) = \left(\left\lceil\frac{k}{r}\right\rceil, (k-1)\bmod r + 1\right)$. 
    Given inputs $\bs{a}, \bs{b}\in \set{0, 1}^m$, define bits $a_{i, j}$ and $b_{i, j}$ as follows.
    If $(i, j) = \tau(k)$ for some $k \in [m]$, set $a_{i, j} \coloneqq \bs a_k$ and $b_{i, j} \coloneqq \bs b_k$. 
    Otherwise set $a_{i, j} = b_{i, j} \coloneqq 0$.
    For every $(i, j)\in [q] \times [r]$, Alice adds the edge $\set{w^{(A)}_i, u^{(A)}_j}$ if $a_{i, j} = 0$, and the edge $\set{w^{(A)}_i, v^{(A)}_j}$ if $a_{i, j} = 1$. 
    Bob adds the analogous edge $\set{w^{(B)}_i, u^{(B)}_j}$ if $b_{i, j} = 0$, and the edge $\set{w^{(B)}_i, v^{(B)}_j}$ if $b_{i, j} = 1$. 

    We next assign some virtual colors to certain nodes.
    Define virtual colors by $\kappa\left(w^{(\sigma)}_i\right) = i$, $\kappa\left(u^{(\sigma)}_j\right) = q + j$, and $\kappa\left(v^{(\sigma)}_j\right) = q + r + j$.
    The nodes $x^{(A)}$, $x^{(B)}$ and connecting nodes $c_{i}$ do not need virtual colors.
    Figure~\ref{fig:deterministic-lower-2} below gives an illustration of the virtually colored hard instance. 

    \begin{figure*}[ht]
		\centering
		\begin{tikzpicture}[scale = 0.75, transform shape]
			\tikzset{every node/.style = {minimum size = 1.0cm, thick, inner sep = 0pt}}
			\definecolor{deepu}{rgb}{0.0, 0.5, 0.5}
			\definecolor{lightu}{rgb}{0.8, 1.0, 1.0}

			\definecolor{deepv}{rgb}{1.0, 0.55, 0.0}
			\definecolor{lightv}{rgb}{1.0, 0.9, 0.8}

			\definecolor{deepw}{rgb}{0.0, 0.0, 0.5}
			\definecolor{lightw}{rgb}{0.7, 0.7, 1.0}
		
			\node[circle, draw] (xA) at (-.8, 0) {$x^{(A)}$};
			\foreach \i in {1, ..., 5}
			{
				\pgfmathparse{100 * (\i - 1)/(7 - 1)}
				\colorlet{nodecolor}{lightw!\pgfmathresult!deepw}

				\ifnum \i = 4
					\node (wA\i) at (-3, 1.2 * 3 - 1.2 * \i) {$\dots$};
				\else
					\node[circle, draw, fill = nodecolor, text = white] (wA\i) at (-3, 1.2 * 3 - 1.2 * \i)
					{
						\ifnum \i < 4 $w^{(A)}_{\i}$ \else $w^{(A)}_{q}$ \fi
					};
				\fi
			}
			\foreach \i in {1, 2, 3, 5}
			{
				\draw[thick] (xA) -- (wA\i);
			}
		    
			\foreach \i in {1, ..., 7} 
			{
				\pgfmathparse{100 * (\i - 1)/(7 - 1)}
				\colorlet{nodecolor}{lightu!\pgfmathresult!deepu}

				\ifnum \i = 5
					\node (uA\i) at (-3 - 0.8 * \i, 6.5 - 0.8 * \i) {$\dots$};
				\else
					\node[circle, draw, fill = nodecolor] (uA\i) at (-3 - 0.8 * \i, 6.5 - 0.8 * \i)
					{
						\ifnum \i < 5 $u^{(A)}_{\i}$ \else
						\ifnum \i = 6 $u^{(A)}_{r - 1}$ \else $u^{(A)}_{r}$ \fi \fi
					};
				\fi
    		}
		
			\foreach \i in {1, ..., 7} 
			{
				\pgfmathparse{100 * (\i - 1)/(7 - 1)}
				\colorlet{nodecolor}{lightv!\pgfmathresult!deepv}

				\ifnum \i = 5
					\node (vA\i) at (-3 - 0.8 * \i, -6.5 + 0.8 * \i) {$\dots$};
				\else
					\node[circle, draw, fill = nodecolor] (vA\i) at (-3 - 0.8 * \i, -6.5 + 0.8 * \i)
					{
						\ifnum \i < 5 $v^{(A)}_{\i}$ \else
						\ifnum \i = 6 $v^{(A)}_{r - 1}$ \else $v^{(A)}_{r}$ \fi \fi
					};
				\fi
			}
		
			\node[circle, draw] (xB) at (.8, 0) {$x^{(B)}$};
			\foreach \i in {1, ..., 5}
			{
				\pgfmathparse{100 * (\i - 1)/(7 - 1)}
				\colorlet{nodecolor}{lightw!\pgfmathresult!deepw}
				\ifnum \i = 4
					\node (wB\i) at (3, 1.2 * 3 - 1.2 * \i) {$\dots$};
				\else
					\node[circle, draw, fill = nodecolor, text = white] (wB\i) at (3, 1.2 * 3 - 1.2 * \i)
					{
						\ifnum \i < 4 $w^{(B)}_{\i}$ \else $w^{(B)}_{q}$ \fi
					};
				\fi
			}
			\foreach \i in {1, 2, 3, 5}
			{
				\draw[thick] (xB) -- (wB\i);
			}
		
			\foreach \i in {1, ..., 7}
			{
				\pgfmathparse{100 * (\i - 1)/(7 - 1)}
				\colorlet{nodecolor}{lightu!\pgfmathresult!deepu}
				\ifnum \i = 5
					\node (uB\i) at (3 + 0.8 * \i, 6.5 - 0.8 * \i) {$\dots$};
				\else
					\node[circle, draw, fill = nodecolor] (uB\i) at (3 + 0.8 * \i, 6.5 - 0.8 * \i)
					{
						\ifnum \i < 5 $u^{(B)}_{\i}$ \else
						\ifnum \i = 6 $u^{(B)}_{r - 1}$ \else $u^{(B)}_{r}$ \fi \fi
					};
				\fi
			}
		
			\foreach \i in {1, ..., 7}
			{
				\pgfmathparse{100 * (\i - 1)/(7 - 1)}
				\colorlet{nodecolor}{lightv!\pgfmathresult!deepv}
				\ifnum \i = 5
					\node (vB\i) at (3 + 0.8 * \i, -6.5 + 0.8 * \i) {$\dots$};
				\else
					\node[circle, draw, fill = nodecolor] (vB\i) at (3 + 0.8 * \i, -6.5 + 0.8 * \i)
					{%
						\ifnum \i < 5 $v^{(B)}_{\i}$ \else
						\ifnum \i = 6 $v^{(B)}_{r - 1}$ \else $v^{(B)}_{r}$ \fi \fi
					};
				\fi
			}
		
			\draw[thick, decorate, decoration={snake, amplitude=1mm, segment length = 2mm}] (xA) -- (xB);
			\draw [dashed, very thick] (0, 7) -- (0, -7);

			\node[anchor = south east] at (-1, 6.) {\Large Alice};
			\node[anchor = south west] at (1, 6.) {\Large Bob};

			\draw[dashed] (wA1) -- node[below left] {$\bs{a}_1 = 0$?} (uA1);
			\draw[dashed] (wA1) -- node[above left] {$\bs{a}_4 = 1$?} (vA4);
		\end{tikzpicture}
		\caption{The constructed virtually colored hard instance graph. The edges $\set{x^{(\sigma)}, u^{(\sigma)}_j}$ and $\set{x^{(\sigma)}, v^{(\sigma)}_j}$ are omitted, and the connecting path is simplified.}
		\label{fig:deterministic-lower-2}
	\end{figure*}

    \textbf{\textit{Step Three: Decolorization.}}
    We perform the decolorization process separately on Alice's side and Bob's side, so no new edge crosses the Alice-Bob cut. 
    The idea is to add auxiliary nodes and connect certain nodes to them in order to adjust their degrees, thereby encoding the desired colors.
    
    Let $Z^{(\sigma)} \coloneqq \set{w^{(\sigma)}_i: i\in[q]} \cup \set{u^{(\sigma)}_j, v^{(\sigma)}_j : j \in [r]}$ be the colored nodes on each side, and $K \coloneqq q + 2r$ be the number of used virtual colors.
    Let $\Delta_0$ be an upper bound on the degree of every node in $Z^{(\sigma)}$ before decolorization.
    Indeed, before decolorization, each $w_i^{(\sigma)}$ has degree $r + 1$, while each $u_j^{(\sigma)}$ or $v_j^{(\sigma)}$ has degree at most $q + 1$. 
    Hence $\Delta_0 = q + r + 1$ is a valid upper bound.
    We will assign a distinct target degree $T_k \coloneqq \Delta_0 + k$ for each virtual color $k \in [K]$, and also define the padding target degree $T_\perp \coloneqq \Delta_0 + K + 1$.
    Then all $T_k$ are pairwise distinct, all are larger than $\Delta_0$, and none equals $T_\perp$.

    Now fix one side $\sigma \in \set{A, B}$. 
    For each node $z \in Z^{(\sigma)}$, let $d_0(z)$ be its current degree in the virtually colored hard instance, and define $p_z \coloneqq T_{\kappa(z)} - d_0(z)$. 
    Since $T_{\kappa(z)} > \Delta_0\ge d_0(z)$, we have $p_z > 0$. 
    Let $M^{(\sigma)}\coloneqq \max_{z\in Z^{(\sigma)}} p_z > 0$. 
    Create padding nodes $P^{(\sigma)}_1, \cdots, P^{(\sigma)}_{M^{(\sigma)}}$. 
    Since $M^{(\sigma)} > 0$, there exists at least one padding node on each side.
    For every $z\in Z^{(\sigma)}$, connect $z$ to the first $p_z$ padding nodes, namely $P^{(\sigma)}_1, \cdots, P^{(\sigma)}_{p_z}$. 
    Hence the resulting degree of $z$ becomes $d_0(z) + p_z = T_{\kappa(z)}$.
    Also, every padding node $P_i^{(\sigma)}$ is adjacent to some node in $Z^{(\sigma)}$ by the definition of $M^{(\sigma)}$.

    Finally, we further modify the degrees of the padding nodes to ensure that they have the same first-round WL color.
    Let $s^{(\sigma)}_i \coloneqq \left|\set{z\in Z^{(\sigma)}:p_z\ge i}\right|$ be the number of neighbors currently adjacent to $P^{(\sigma)}_i$. 
    Since $s^{(\sigma)}_i\le \left|Z^{(\sigma)}\right| = K < T_\perp$, we can regularize the degrees of the padding nodes to $T_{\perp}$. 
    Let $R^{(\sigma)} \coloneqq \max_{i\in [M^{(\sigma)}]} (T_\perp - s^{(\sigma)}_i)$. 
    Create regularizer nodes $Q^{(\sigma)}_1, \cdots,Q^{(\sigma)}_{R^{(\sigma)}}$. 
    Since $R^{(\sigma)} > 0$, there exists at least one regularizer node on each side.
    For each $i\in [M^{(\sigma)}]$, connect $P^{(\sigma)}_i$ to $Q^{(\sigma)}_1, \cdots, Q^{(\sigma)}_{T_\perp - s^{(\sigma)}_i}$. 
    Then every padding node has degree exactly $s^{(\sigma)}_i + (T_\perp - s^{(\sigma)}_i) = T_\perp$. 
    Moreover, every regularizer node is adjacent to some padding node by the definition of $R^{(\sigma)}$, so this step does not create isolated nodes. 
    Figure~\ref{fig:deterministic-lower-3} below gives an illustration of the decolorization process. 

    \begin{figure}[ht]
        \centering
        \begin{tikzpicture}[
            vtx/.style={circle, draw, fill=white, inner sep=0pt, minimum size=6mm},
            reg/.style={circle, draw, fill=white, inner sep=0pt, minimum size=6mm},
            pad/.style={circle, draw, fill=white, inner sep=0pt, minimum size=6mm},
            lbl/.style={draw=none, fill=none, inner sep=0pt},
            every node/.style={font=\small}
        ]
            \def\yG{0}
            \def\yR{2}
            \def\yP{4.}
            \def\yQ{6.}
        
            \node[vtx, fill=teal!25] (x) at (0,\yG) {$x$};
            \node[vtx, fill=blue!25] (y) at (4,\yG) {$y$};
        
            \draw[very thick] (x) -- (y)
                node[midway, below=4pt] {$\set{x, y}$};
        
            \node[lbl, anchor=east] at ($(x)+(-.45,0)$) {colored nodes};
        
            \node[reg] (rx) at (0,\yR) {$x$};
            \node[reg] (ry) at (4,\yR) {$y$};
        
            \draw[very thick] (rx) -- (ry);
        
            \draw[very thick, -Latex, shorten <=2pt, shorten >=4pt]
                (x.north) -- (rx.south);
            \draw[very thick, -Latex, shorten <=2pt, shorten >=4pt]
                (y.north) -- (ry.south);
        
            \node[lbl, anchor=east] at ($(rx)+(-.5,0)$) {decolorized nodes};
        
            \draw[rounded corners = 8pt, fill = orange!8, draw = orange!35, thick]
                (-2, 3.5) rectangle (7.,4.5);
        
            \node[pad] (P1)   at (-1.5,\yP) {\tiny $P_1$};
            \node[pad] (P2)   at (-0.5,\yP) {\tiny $P_2$};
            \node[lbl] (e1)   at ( 0.5,\yP) {$\cdots$};
            \node[pad] (Ppu)  at ( 1.5,\yP) {\tiny $P_{p_x}$};
            \node[pad] (Ppu1) at ( 2.5,\yP) {\tiny $P_{p_x+1}$};
            \node[lbl] (e2)   at ( 3.5,\yP) {$\cdots$};
            \node[pad] (Ppv)  at ( 4.5,\yP) {\tiny $P_{p_y}$};
            \node[lbl] (e3)   at ( 5.5,\yP) {$\cdots$};
            \node[pad] (PM)   at ( 6.5,\yP) {\tiny $P_M$};
        
            \node[lbl, anchor=east] at (-2.2, \yP) {padding nodes};
    
            \draw[rounded corners = 8pt, fill = purple!8, draw = purple!35, thick]
                (-2, 5.5) rectangle (2.,6.5);
    
            \node[reg] (Q1)   at (-1.5,\yQ) {\tiny $Q_1$};
            \node[reg] (Q2)   at (-0.5,\yQ) {\tiny $Q_2$};
            \node[lbl] (qe1)  at ( 0.5,\yQ) {$\cdots$};
            \node[reg] (QR)  at ( 1.5,\yQ) {\tiny $Q_{R}$};
    
            \node[lbl, anchor=east] at (-2.2, \yQ) {regularizer nodes};
    
            \foreach \p in {P1,P2,Ppu}{
                \draw[teal!70!black, thick, opacity=.85]
                    (rx) -- (\p);
            }
        
            \foreach \p in {P1,P2,Ppu,Ppu1,Ppv}{
                \draw[blue!70!black, thick, opacity=.75]
                    (ry) -- (\p);
            }
    
            \foreach \p in {P1,P2,Ppu}{
                \draw[gray!70, thick, opacity=.45](\p.north) -- (Q1.south);
            }
            \foreach \p in {Ppu1,Ppv}{
                \foreach \q in {Q1,Q2}{
                    \draw[gray!70, thick, opacity=.45]
                        (\p.north) -- (\q.south);
                }
            }
            \node[lbl, align=left, anchor=west] at (4.5, 2.75)
                {$x$ connects to first $p_x$ padding nodes \\
                 $y$ connects to first $p_y$ padding nodes};
    
            \node[lbl, align=left, anchor = west] at (3.5, 5.25)
                {$P_i$ connects to first $T_\perp - s_i$ regularizer nodes \\
                 so every padding node has degree $T_\perp$};
        \end{tikzpicture}
        \caption{An illustration of the decolorization process.}
        \label{fig:deterministic-lower-3}
    \end{figure}
    
    \textbf{\textit{Step Four: Filler Nodes.}}
    Finally, to make the number of nodes $\Theta(n)$, we add $3n$ new leaves adjacent to $x^{(A)}$ and $3n$ new leaves adjacent to $x^{(B)}$.
    We denote the constructed graph by $G_{(n,m,D;\bs{a},\bs{b})}$, and for notational convenience, we sometimes simply write it as $G$.
        
    \paragraph{Analysis of Graph Parameters.}
    Next, we verify that the constructed graph has the desired number of nodes, number of edges, and diameter.
    \begin{itemize}[leftmargin=*]
        \item \textbf{\textit{Number of Nodes.}}
        For each side $\sigma \in \set{A, B}$, there is one $x^{(\sigma)}$-node, $q = \Theta(\sqrt{m})$ $w^{(\sigma)}$-nodes, $r = \Theta(\sqrt{m})$ $u^{(\sigma)}$-nodes, $r = \Theta(\sqrt{m})$ $v^{(\sigma)}$-nodes, and $\Theta(n)$ filler nodes attached to $x^{(\sigma)}$.
        In addition, there are at most $M^{(\sigma)} \le \max_{z \in Z^{(\sigma)}} T_{\kappa(z)} \le \Delta_0 + K + 1 = \Theta(\sqrt{m})$ padding nodes, and at most $R^{(\sigma)} \le T_{\perp} = \Delta_0 + K + 1 = \Theta(\sqrt{m})$ regularizer nodes.
        Together with the $\Theta(D)$ connecting-path nodes, the whole graph has $\Theta(\sqrt{m} + n + D) = \Theta(n)$ nodes.

        \item \textbf{\textit{Number of Edges.}}
        For each side $\sigma \in \set{A, B}$, there are $q = \Theta(\sqrt{m})$ edges between the $x^{(\sigma)}$-node and the $w^{(\sigma)}$-nodes, $2r = \Theta(\sqrt{m})$ edges between the $x^{(\sigma)}$-node and the $u^{(\sigma)}$- and $v^{(\sigma)}$-nodes, and $\Theta(n)$ edges between the $x^{(\sigma)}$-node and the filler nodes.
        For each $w^{(\sigma)}$-node, there are $r = \Theta(\sqrt{m})$ edges connecting it to the $u^{(\sigma)}$-nodes or the $v^{(\sigma)}$-nodes.
        For every node $z \in Z^{(\sigma)}$, there are $p_z \le T_{\kappa(z)} \le \Delta_0 + K + 1 = O(\sqrt{m})$ edges connecting it to padding nodes.
        Hence, this part contributes $O((q+2r)\sqrt{m}) = O(m)$ edges.
        For every padding node, there are at most $T_{\perp}$ edges connecting it to regularizer nodes.
        Hence, this part contributes $O(M^{(\sigma)}\sqrt{m}) = O(m)$ edges.
        Together with the $\Theta(D)$ edges on the path connecting the two sides, the whole graph has $\Theta(m + n + D) = \Theta(m)$ edges.

        \item \textbf{\textit{Diameter.}}
        Let $L = D-6$ be the length of the connecting path between $x^{(A)}$ and $x^{(B)}$.
        Every non-path vertex attached to the $A$-side is within distance at most $3$ from $x^{(A)}$, and every non-path vertex attached to the $B$-side is within distance at most $3$ from $x^{(B)}$. 
        Moreover, every path vertex lies on the $x^{(A)}$-$x^{(B)}$ path of length $L$.
        Thus any two vertices attached to opposite side gadgets have distance at most $3 + L + 3 = D$. 
        Two non-path vertices attached to the same side gadget have distance at most $6\le D$. 
        Distances involving path vertices are also at most $D$, since every path vertex is within distance at most $L$ from one of $x^{(A)}$ and $x^{(B)}$, and every non-path vertex is within distance at most $3$ from the corresponding endpoint. 
        Hence the diameter is at most $D$.

        On each side there exists at least one regularizer node at distance exactly $3$ from the corresponding distinguished node. 
        Since the only connection between the two side gadgets is the connecting path, a regularizer node on the $A$-side and a regularizer node on the $B$-side are at distance exactly $3 + L + 3 = D$. 
        Therefore, the diameter of $G$ is exactly $D$.
    \end{itemize}
    
    \paragraph{Analysis of WL Colors.}
    Let $\chi^{(t)}$ denote the coloring after $t$ rounds of $1$-WL starting from the uniform coloring. 
    \begin{itemize}[leftmargin=*]
        \item \textbf{\textit{Round $1$.}} 
        At round $1$, WL colors are determined by the degrees of the nodes.
        \begin{itemize}[leftmargin=*]
            \item (1) For any $z \in Z \coloneqq Z^{(A)} \cup Z^{(B)}$, we have $\deg_G(z) = T_{\kappa(z)}$.
            Thus, for any $z,z' \in Z$, $\chi^{(1)}(z) = \chi^{(1)}(z')$ if and only if $\kappa(z) = \kappa(z')$, since the target degrees $\Delta_0 + 1 = T_1 < T_2 < \cdots < T_K = \Delta_0 + K < T_\perp = \Delta_0 + K + 1$ are distinct.
            In other words, $\chi^{(1)}\left(w_i^{(A)}\right) = \chi^{(1)}\left(w_i^{(B)}\right)$ for each $i \in [q]$, and $\chi^{(1)}\left(u_j^{(A)}\right) = \chi^{(1)}\left(u_j^{(B)}\right)$ and $\chi^{(1)}\left(v_j^{(A)}\right) = \chi^{(1)}\left(v_j^{(B)}\right)$ for each $j \in [r]$; moreover, these round-$1$ WL colors are pairwise distinct.

            \item (2) In addition, all padding nodes have the same round-$1$ WL color, which is different from that of the nodes in $Z^{(A)} \cup Z^{(B)}$, since they all have the distinct target degree $T_\perp$.

            \item (3) Moreover, since $\deg_G\left(x^{(A)}\right) = \deg_G\left(x^{(B)}\right) = q + 2r + 3n + 1$, we have $\chi^{(1)}\left(x^{(A)}\right) = \chi^{(1)}\left(x^{(B)}\right)$.
            Note that $m \le n^2$, so $\sqrt{m} \le n$, and hence $r = \left\lceil \sqrt{m} \right\rceil \le n$.
            Also, since $r \ge \sqrt{m}$, we have $\frac{m}{r} \le \sqrt{m} \le r$, and therefore $q = \left\lceil \frac{m}{r} \right\rceil \le r$.
            Consequently, $q + r \le 2n$.
            Meanwhile, $\deg_G\left(x^{(\sigma)}\right) - T_\perp = q + 2r + 3n + 1 - (2q + 3r + 2) = 3n - (q + r) - 1$.
            Using $q + r \le 2n$, we obtain $3n - (q + r) - 1 \ge 3n - 2n - 1 = n - 1 > 0$ for all $n \ge 2$.
            Therefore, the degrees of the nodes $x^{(A)}$ and $x^{(B)}$ are greater than $T_\perp$, so their round-$1$ color is also different from all the colors mentioned above.

            \item (4) Finally, all filler nodes adjacent to the $x$-nodes have the same round-$1$ WL color, and all connecting-path nodes have the same round-$1$ WL color.
            The filler-node color and the connecting-path-node color are different from the round-$1$ colors of the nodes in $Z^{(A)}\cup Z^{(B)}$, the padding nodes, and the distinguished nodes $x^{(A)}, x^{(B)}$.
            Indeed, filler nodes have degree $1$, connecting-path nodes have degree $2$, nodes in $Z^{(A)}\cup Z^{(B)}$ have degrees in $\set{T_1,\cdots,T_K}$, padding nodes have degree $T_\perp$, and $x^{(A)},x^{(B)}$ have degree larger than $T_\perp$.
            The colors of the regularizer nodes are irrelevant for the subsequent argument.
        \end{itemize}
        
        \item \textbf{\textit{Round $2$.}}
        The round-$2$ WL color of a node is determined by its round-$1$ WL color together with the multiset of round-$1$ WL colors of its neighbors.
        \begin{itemize}[leftmargin=*]
            \item (1) Fix a side $\sigma \in \set{A, B}$ and consider the nodes $u^{(\sigma)}_j$ and $v^{(\sigma)}_j$.
            For each $j \in [r]$, define $S^{(A)}_{0, j} \coloneqq \set{i \in [q]: a_{i, j} = 0}$, $S^{(A)}_{1, j} \coloneqq \set{i \in [q]: a_{i, j} = 1}$, $S^{(B)}_{0, j} \coloneqq \set{i \in [q]: b_{i, j} = 0}$, and $S^{(B)}_{1, j} \coloneqq \set{i \in [q]: b_{i, j} = 1}$.
            The neighbors of $u^{(\sigma)}_j$ consist of the node $x^{(\sigma)}$, the nodes
            $w^{(\sigma)}_i$ with $i\in S^{(\sigma)}_{0, j}$, and $T_{q + j} - \left(1 + \left|S^{(\sigma)}_{0, j}\right|\right)$ padding nodes. 
            Similarly, the neighbors of $v^{(\sigma)}_j$ consist of the node $x^{(\sigma)}$, the nodes $w^{(\sigma)}_i$ with $i\in S^{(\sigma)}_{1, j}$, and $T_{q + r + j} - \left(1+\left|S^{(\sigma)}_{1,j}\right|\right)$ padding nodes.
            By the round-$1$ analysis, we have $\chi^{(1)}\left(u^{(A)}_j\right)=\chi^{(1)}\left(u^{(B)}_j\right)$, $\chi^{(1)}\left(v^{(A)}_j\right)=\chi^{(1)}\left(v^{(B)}_j\right)$.
            Moreover, the round-$1$ colors of the nodes $w_1, \cdots, w_q$ are pairwise distinct and agree across the two sides for the same index, while the padding color is common and is distinct from all colors of the $w$-nodes.
            Therefore the multiset of round-$1$ colors seen by $u^{(A)}_j$ equals the one seen by $u^{(B)}_j$ if and only if $S^{(A)}_{0, j} = S^{(B)}_{0, j}$.
            Since the entries are binary, this is equivalent to $(a_{1, j}, \cdots, a_{q, j}) = (b_{1, j}, \cdots, b_{q, j})$.
            Hence $\chi^{(2)}\left(u^{(A)}_j\right) = \chi^{(2)}\left(u^{(B)}_j\right) \iff (a_{1, j}, \cdots, a_{q, j}) = (b_{1, j}, \cdots, b_{q, j})$.
            The same argument applied to the sets $S^{(\sigma)}_{1, j}$ gives $\chi^{(2)}\left(v^{(A)}_j\right) = \chi^{(2)}\left(v^{(B)}_j\right) \iff (a_{1, j}, \cdots, a_{q, j}) = (b_{1, j}, \cdots, b_{q, j})$.
            
            \item (2) Next, consider a node $w^{(\sigma)}_i$.
            Its neighbors consist of one node $x^{(\sigma)}$, one of $u^{(\sigma)}_j$ or $v^{(\sigma)}_j$ for each $j \in [r]$, and $T_i - r - 1$ padding nodes.
            By our analysis of the round-$1$ colors, we have $\chi^{(1)}\left(w_i^{(A)}\right) = \chi^{(1)}\left(w_i^{(B)}\right)$, and $\chi^{(1)}\left(x^{(A)}\right) = \chi^{(1)}\left(x^{(B)}\right)$, and the padding neighbors also contribute the same color with the same multiplicity.
            Thus, we have $\chi^{(2)}\left(w^{(A)}_i\right) = \chi^{(2)}\left(w^{(B)}_i\right)$ if and only if $(a_{i, 1}, \cdots, a_{i,r}) = (b_{i,1}, \cdots, b_{i, r})$.

            \item (3) Then, consider the node $x^{(\sigma)}$.
            It has $q$ neighbors $w_i^{(\sigma)}$, $r$ neighbors $u_i^{(\sigma)}$, $r$ neighbors $v_i^{(\sigma)}$, $3n$ filler-node neighbors, and one connecting-path node neighbor.
            By our analysis of the round-$1$ colors, we have $\chi^{(1)}\left(x^{(A)}\right) = \chi^{(1)}\left(x^{(B)}\right)$, $\chi^{(1)}\left(w_i^{(A)}\right) = \chi^{(1)}\left(w_i^{(B)}\right)$ for each $i \in [q]$, and $\chi^{(1)}\left(u_j^{(A)}\right) = \chi^{(1)}\left(u_j^{(B)}\right)$ and $\chi^{(1)}\left(v_j^{(A)}\right) = \chi^{(1)}\left(v_j^{(B)}\right)$ for each $j \in [r]$.
            Moreover, all filler nodes adjacent to the $x$-nodes have the same round-$1$ WL color, and all connecting-path nodes have the same round-$1$ WL color.
            Therefore, $\chi^{(2)}\left(x^{(A)}\right) = \chi^{(2)}\left(x^{(B)}\right)$.

            \item (4) Moreover, all filler nodes adjacent to the $x$-nodes have the same round-$2$ WL color, since they all have the same round-$1$ color and the round-$1$ color of their neighbor $x^{(\sigma)}$ is identical; and connecting-path nodes $c_1$ and $c_{D - 7}$ have the same round-$2$ WL color for a similar reason.
            We do not need to consider the round-$2$ WL colors of the regularizer nodes, the padding nodes, or the other connecting-path nodes.
        \end{itemize}
        
        \item \textbf{\textit{Round $3$.}}
        The round-$3$ WL color of a node is determined by its round-$2$ WL color together with the multiset of round-$2$ WL colors of its neighbors.
        For node $x^{(\sigma)}$, we have already shown that $\chi^{(2)}\left(x^{(A)}\right) = \chi^{(2)}\left(x^{(B)}\right)$.
        The filler-node contribution and the connecting-path contribution are also identical on the two sides.
        It remains to compare the round-$2$ colors contributed by the nodes in $Z^{(\sigma)}$.
        
        \begin{itemize}[leftmargin=*]
            \item If $\bs{a} = \bs{b}$, then every row and every column of the matrices $(a_{i, j})$ and $(b_{i, j})$ agree.
            By the round-$2$ analysis above, we then have $\chi^{(2)}\left(w^{(A)}_i\right) = \chi^{(2)}\left(w^{(B)}_i\right)$ for all $i \in [q]$, and $\chi^{(2)}\left(u^{(A)}_j\right) = \chi^{(2)}\left(u^{(B)}_j\right)$, $\chi^{(2)}\left(v^{(A)}_j\right) = \chi^{(2)}\left(v^{(B)}_j\right)$ for all $j \in [r]$.
            Hence the entire multiset of round-$2$ colors seen by $x^{(A)}$ is the same as the one seen by $x^{(B)}$, and therefore $\chi^{(3)}\left(x^{(A)}\right) = \chi^{(3)}\left(x^{(B)}\right)$.

            \item Conversely, suppose $\bs{a} \neq \bs{b}$. 
            Since $\tau$ is injective and all unused entries are set to $0$ on both sides, there exists some $(i, j)\in[q] \times [r]$ such that $a_{i, j}\neq b_{i, j}$. 
            Then the $i$-th rows differ, so by the round-$2$ analysis of the $w$-nodes, $\chi^{(2)}\left(w_i^{(A)}\right)\neq \chi^{(2)}\left(w_i^{(B)}\right)$.
            Because WL colors always refine previous-round colors, any node whose round-$2$ color equals $\chi^{(2)}\left(w_i^{(A)}\right)$ must have the same round-$1$ color as $w_i^{(A)}$. 
            Among the neighbors of $x^{(B)}$, the only node with this round-$1$ color is $w_i^{(B)}$.
            However, $\chi^{(2)}\left(w_i^{(B)}\right)\neq\chi^{(2)}\left(w_i^{(A)}\right)$. 
            Therefore the color $\chi^{(2)}\left(w_i^{(A)}\right)$ appears with multiplicity one in the round-$2$ neighbor-color multiset of $x^{(A)}$, but with multiplicity zero in the corresponding multiset of $x^{(B)}$. 
            Hence the two round-$2$ neighbor-color multisets are different, and therefore $\chi^{(3)}\left(x^{(A)}\right)\neq \chi^{(3)}\left(x^{(B)}\right)$.
        \end{itemize}
    \end{itemize}
    Thus, from the above analysis, we obtain $\chi^{(3)}\left(x^{(A)}\right) = \chi^{(3)}\left(x^{(B)}\right) \iff \bs{a} = \bs{b}$.

\paragraph{Lower Bound.}
Now suppose that we have an oblivious MPGNN with depth $d$, width $w$, and precision $p$ that can simulate one color-refinement step with zero error.
We now derive the lower bound from two perspectives:
\begin{itemize}[leftmargin=*]
    \item \textbf{\textit{Communication Complexity.}}
    We can use this MPGNN to obtain a communication protocol for $\func{EQ}_m$.
    By composing three copies of the MPGNN, we obtain an MPGNN of depth $3d$, width $w$, and precision $p$ that simulates three rounds of WL color refinement.
    We then apply it to the hard-instance graph constructed above and compare the colors of $x^{(A)}$ and $x^{(B)}$.

    Alice and Bob simulate the MPGNN on their respective sides.
    The Alice-Bob cut contains exactly one graph edge.
    Therefore, in each message-passing layer, only $O(wp)$ bits need to cross the cut.
    Over $3d$ layers, the total communication is $O(dwp)$ bits.
    Finally, the simulator outputs labels in the color set $[C]$, so the colors of $x^{(A)}$ and $x^{(B)}$ can be compared using $O(\log C)$ bits of communication.
    Therefore, we obtain an $O(dwp+\log C)$-bit communication protocol for $\func{EQ}_m$.

    Since the deterministic and randomized zero-error communication complexities of $\func{EQ}_m$ satisfy $D(\func{EQ}_m) = R_0(\func{EQ}_m) = \Omega(m)$ bits, we get $O\left(dwp + \log C\right) = \Omega(m)$.
    When $\log C = o(m)$, this implies $dwp = \Omega(m)$, and hence $d = \Omega\left(\frac{m}{wp}\right)$.

    \item \textbf{\textit{Locality.}}
    Let $\ell \coloneqq \func{dist}_G\left(x^{(A)}, x^{(B)}\right) = \Theta(D)$.
    We claim that the composed MPGNN of depth $3d$ must satisfy $3d \ge \frac{\ell}{2} = \Theta(D)$.
    
    Suppose for contradiction that $3d < \frac{\ell}{2}$. 
    Then, by locality of message passing, the output color of $x^{(A)}$ after the composed MPGNN is independent of Bob's input $\bs{b}$, and the output color of $x^{(B)}$ is independent of Alice's input $\bs{a}$. 
    Therefore, there exist functions $F, H:\set{0, 1}^m \to [C]$ such that the output color of $x^{(A)}$ is $F(\bs{a})$ and the output color of $x^{(B)}$ is $H(\bs{b})$\footnote{For the zero-error randomized case, since the seed space is finite $\set{0, 1}^R$ and the failure probability is zero for every input, every seed in the support yields a correct deterministic simulator. Hence we may fix an arbitrary seed and apply the deterministic argument below.}.
    
    Since the composed MPGNN simulates three WL rounds with zero error, and the hard instance satisfies $\chi_G^{(3)}\left(x^{(A)}\right) = \chi_G^{(3)}\left(x^{(B)}\right) \iff \bs{a} = \bs{b}$, we must have $F(\bs{a}) = H(\bs{b}) \iff \bs{a} = \bs{b}$, for all $\bs{a}, \bs{b}\in \set{0, 1}^m$.
    In particular, $F(\bs{a}) = H(\bs{a})$ for every $\bs{a} \in \set{0, 1}^m$. 
    Moreover, $F$ must be injective: if $F(\bs{a}) = F(\bs{a}')$, then $F(\bs{a}') = F(\bs{a}) = H(\bs{a})$, which implies $\bs{a}' = \bs{a}$ by the equivalence above. 
    Thus $|\func{Im}(F)| = 2^m$, and hence $C\ge 2^m$.
    This contradicts the assumption $\log C = o(m)$ for sufficiently large $m$. 
    Therefore $3d\ge \frac{\ell}{2} = \Theta(D)$, and hence $d = \Omega(D)$.
\end{itemize}
Combining the two bounds gives $d = \Omega\left(\max\set{D, \frac{m}{wp}}\right) = \Omega\left(D + \frac{m}{wp}\right)$.
\end{proof}

\section{CONGEST, Our Rooted Port-Aware MPGNN, and Proof of Theorem~\ref{thm:deterministic-upper}}
\label{app:proof-deterministic-upper}
\subsection{CONGEST Model and Our Rooted Port-Aware MPGNN Model}
Before proving Theorem~\ref{thm:deterministic-upper}, we first introduce the CONGEST model, the rooted, port-aware MPGNN model used in our construction, and several CONGEST algorithms that can be transferred to this model.

The CONGEST model was introduced to study the number of communication rounds required to solve distributed computing problems in bandwidth-limited networks~\cite{linial1987distributive, linial1992locality, peleg2000distributed, ghaffari2022distributed}. 
Due to the similarity between the update process in the CONGEST model (Equation~\eqref{eqn:CONGEST}) and the message-passing mechanism of MPGNNs (Equation~\eqref{eqn:message-passing}), prior work~\cite{loukas2020what} has proposed using the CONGEST model to characterize MPGNNs and derived lower bounds for tasks such as cycle detection.
The definition of the CONGEST model is:
\begin{definition}[CONGEST Model]
The CONGEST model is defined on an $n$-node graph $G = (V = [n], E)$. 
Initially, each node knows the total number of nodes $n$, its own initial features, and its unique identifier, which lies in $[\poly(n)]$ or $[n]$.
In each communication round, a node receives an $O(w)$-word message\footnote{In the standard setting, message sizes are restricted to $O(\log n)$ bits, that is, $O(1)$ words when $p = \Theta(\log n)$. In~\cite{loukas2020what}, this constraint is relaxed to $O(w)$ words, that is, $O(w \log n)$ bits when $p = \Theta(\log n)$, to match the width of MPGNNs.} from each neighbor, performs local computation based on its current knowledge, and then sends an $O(w)$-word message to each neighbor, where the messages sent to different neighbors may be different.
This process can be described by the following update rule:
\begin{equation}
\label{eqn:CONGEST}
s_{u}^{(\ell + 1)} = \UPD^{(\ell)}\left(s_u^{(\ell)}, \mset{\left(v, \MSG^{(\ell)}\left(s_v^{(\ell)}, u, v\right)\right) : v \in N(u)}\right),
\end{equation}
where $s_u^{(\ell)}$ denotes the internal state (which may not necessarily be a vector) of node $u$. 
\end{definition}
In~\cite{loukas2020what}, the authors view the CONGEST model as a computational model for MPGNNs, so that MPGNN constructions can be obtained by designing CONGEST algorithms.

However, the use of unique identifiers in CONGEST is a rather strong assumption. 
While it greatly enhances the computational power of the model, it also breaks anonymity. 
Moreover, when the update and message functions depend directly on node IDs, it may destroy the permutation invariance or permutation equivariance property commonly required by MPGNNs, and may also make it difficult to generalize to graphs with previously unseen IDs.
For this reason, we do not adopt CONGEST directly as the model for studying upper bounds.
Instead, in order to obtain concrete constructions while keeping nodes from being uniquely identified and preserving the permutation invariance or permutation equivariance of the model as much as possible, we work with the following weaker rooted, port-aware model.

\begin{definition}[Rooted, Port-Aware MPGNN Model]
Our rooted, port-aware MPGNN model is defined on a graph $G=(V,E)$ with one designated root node $r$. 
Initially, each node $u$ receives its input feature vector $\bs{x}_u$ and a root bit $\bs{b}_u \in \set{0, 1}$, where $\bs{b}_u = 1$ if and only if $u = r$. 
Nodes do not have other unique identifiers.
Each node $u$ has an arbitrary but fixed local ordering of its incident ports, denoted by $P(u) = [\deg(u)]$.
These port numbers are purely local and are not globally consistent across endpoints of an edge.

At layer $\ell$, node $u$ maintains an arbitrary finite node state $\bs{s}_u^{(\ell)}$ and arbitrary finite port states $\left(\bs{s}_{u, p}^{(\ell)}\right)_{p\in P(u)}$. 
Through each port $p$, it receives a message $\bs{m}_{u, p}^{(\ell)}$ of at most $wp$ bits. 
It then applies a shared local transition rule
\begin{equation}
    \left(
    \bs{s}_u^{(\ell + 1)}, \left(\bs{s}_{u, p}^{(\ell + 1)}\right)_{p\in P(u)}, \left(\bs{m}_{u, p}^{(\ell + 1)}\right)_{p\in P(u)}
    \right) 
    = \UPD^{(\ell)}\left(\bs{s}_u^{(\ell)}, \bs{b}_u, \left(p, \bs{s}_{u, p}^{(\ell)}, \bs{m}_{u, p}^{(\ell)}\right)_{p\in P(u)}
    \right),
\end{equation}
and sends $\bs{m}_{u, p}^{(\ell + 1)}$ through port $p$. 
The same transition rule $\UPD^{(\ell)}$ is used at all nodes. 
Thus the model has root information and local port awareness, but no unique node identifiers.
\end{definition}

\subsection{Three CONGEST Algorithms}
We then present several classical CONGEST algorithms and point out that they can still be implemented in the weaker rooted, port-aware MPGNN model.

First, a spanning tree rooted at a node $u$ can be constructed using the \alg{Flood} algorithm in the CONGEST model.
\begin{lemma}[cf.\ Chapter 3 of~\cite{peleg2000distributed}, \alg{Flood} Algorithm]
There exists a CONGEST algorithm in which a designated node $u \in V$ can construct a spanning tree $T$ rooted at $u$ with depth $\func{depth}(T) = \max_v\set{d_G(u, v)}$ in $\max_v \set{d_G(u, v)} = O(D)$ rounds, where $D$ is the diameter of the graph.
\end{lemma}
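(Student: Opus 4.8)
The plan is to describe the classical \alg{Flood} algorithm (also called BFS or broadcast) and verify its correctness and round complexity. First, I would have the designated node $u$ initiate the process in round $0$ by sending a ``discover'' message to all of its neighbors. A node $v \neq u$ that has not yet joined the tree, upon first receiving a ``discover'' message in some round $\ell$, picks one of the senders of that round (say, by smallest identifier, to break ties deterministically) as its parent, sets its own distance label to $\ell$, joins the tree, and in round $\ell + 1$ forwards a ``discover'' message to all of its neighbors. A node that has already joined the tree ignores any further ``discover'' messages. Each message carries only a constant number of words (a flag, plus possibly the sender's identifier), so the bandwidth constraint of the CONGEST model is respected.

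Next, I would prove by induction on $\ell$ that every node $v$ with $d_G(u, v) = \ell$ joins the tree exactly in round $\ell$ and sets its parent to be some neighbor $w$ with $d_G(u, w) = \ell - 1$. The base case $\ell = 0$ is just $u$ itself. For the inductive step, any node $v$ at distance $\ell$ has a neighbor $w$ at distance $\ell - 1$, which by the induction hypothesis joined in round $\ell - 1$ and hence sent ``discover'' in round $\ell$; thus $v$ receives a ``discover'' no later than round $\ell$, and it cannot have received one earlier since that would give a path of length $< \ell$ from $u$ to $v$. Hence $v$ joins precisely in round $\ell$, and any sender of the round-$\ell$ ``discover'' that $v$ hears is at distance $\ell - 1$, so the chosen parent edge decreases the distance label by exactly one. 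It follows that the parent pointers define a tree $T$ rooted at $u$, that the tree-depth of $v$ in $T$ equals $d_G(u, v)$, and therefore $\func{depth}(T) = \max_v d_G(u, v)$. Since the last node to be discovered joins in round $\max_v d_G(u, v) \le D$, the algorithm terminates in $\max_v d_G(u, v) = O(D)$ rounds (termination can be detected via a convergecast of ``done'' signals up the tree, which costs only another $O(D)$ rounds and does not change the asymptotics).

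I do not expect any real obstacle here: this is a textbook fact (cited to \cite{peleg2000distributed}), and the only points that need care are the tie-breaking rule that makes the parent choice well-defined and the observation that the distance labels are computed correctly as a byproduct. The one subtlety worth stating explicitly is that a node must forward ``discover'' in the round \emph{after} it joins, not the same round, so that the wavefront advances exactly one hop per round; this is what makes the induction go through and gives the tight bound $\func{depth}(T) = \max_v d_G(u, v)$ rather than merely $O(D)$.
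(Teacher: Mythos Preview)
Your proposal is correct and takes essentially the same approach as the paper: the paper simply sketches the \alg{Flood} algorithm in two sentences (the source sends a token, each node stores and forwards it on first receipt, and discards repeats), citing \cite{peleg2000distributed} without further proof. Your version is a strictly more detailed and rigorous treatment of that same idea, including the inductive correctness argument and the tie-breaking and termination remarks, none of which the paper spells out.
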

The idea behind the \alg{Flood} algorithm is straightforward. 
Initially, the source node $u$ sends a special token to all its neighbors.
Each node, upon receiving the token for the first time, stores it and forwards it to its neighbors.
If a node receives the token again, it discards it and does nothing.
Note that the round complexity of \alg{Flood} is $O(D)$ and it does not depend on the CONGEST parameters $w$ (the number of words per edge per round) or $p$ (the number of bits per word), since the token can be encoded using a constant number of bits.

Additionally, the following lemmas describe the ability of the CONGEST model to broadcast and collect messages to and from a designated node.
\begin{lemma}[cf.\ Chapter 4 of~\cite{peleg2000distributed}, \alg{Downcast} Algorithm]
There exists a CONGEST algorithm where given $M$ messages (of $\Theta(\log n)$ bits) stored at a designated node $u \in V$, and a spanning tree $T$ rooted at $u$, the messages can be broadcast to other nodes in $O(\func{depth}(T) + M)$ rounds.
\end{lemma}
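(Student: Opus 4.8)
The plan is to implement a pipelined (``conveyor-belt'') broadcast along $T$: at every round each node forwards one not-yet-forwarded message to all of its children, processing its messages in FIFO order. First I would fix an arbitrary ordering $m_1, \ldots, m_M$ of the messages held by $u$ and have $u$ push $m_j$ to all of its children during round $j$. A non-root node $v$ appends each message it receives from its parent to a local queue and, starting from the round after it receives its first message, sends the head of its queue to all of its children, one message per round. Because each tree edge carries exactly one $\Theta(\log n)$-bit message per round, the bandwidth constraint of the CONGEST model is respected, and the non-tree edges of $G$ are never used.

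I would prove correctness by induction on the depth $\ell = \func{depth}_T(v)$, establishing the invariant: a node $v$ at depth $\ell$ receives $m_j$ at the end of round $\ell + j - 1$ and forwards it to its children during round $\ell + j$. The base case $\ell = 0$ is immediate, since $u$ holds all messages from the start and sends $m_j$ in round $j$. For the inductive step, if the parent $p$ of $v$ (depth $\ell - 1$) forwards $m_j$ in round $(\ell - 1) + j$, then $v$ receives it at the end of round $\ell + j - 1$; moreover, since $p$ emits exactly one message per round in the order $m_1, m_2, \ldots$, the message arriving at $v$ in round $\ell + j - 1$ is precisely $m_j$, so $v$'s queue stays in order and $v$ can forward $m_j$ in round $\ell + j$. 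Setting $j = M$ and $\ell = \func{depth}(T)$ shows that the last message reaches the deepest node by round $\func{depth}(T) + M - 1$, giving the claimed $O(\func{depth}(T) + M)$ round complexity; termination is detected either because $M$ is broadcast first (an extra $O(\log n)$ bits) or because the stream from the parent ends.

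I expect the main obstacle to be bookkeeping rather than depth: one must verify that the synchronized offset $\ell + j$ in the invariant simultaneously guarantees (i) that no node is ever asked to send two distinct messages across the same edge in a single round, and (ii) that no node stalls waiting for a message that has not yet arrived --- both of which fall out of the invariant once it is stated correctly, but which make the ``obvious'' pipelining argument subtle to pin down. A minor point I would note in passing is that a node with several children broadcasts the same message to all of them in the same round, which is still one message per tree edge and so does not affect the analysis; this lemma is, in any case, a standard fact about pipelined broadcast on trees.
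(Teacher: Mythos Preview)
Your proposal is correct and is exactly the standard pipelined-broadcast argument from Peleg's book; the paper itself does not give a proof of this lemma but simply cites it as a known CONGEST primitive, so there is nothing further to compare.
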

\begin{lemma}[cf.\ Chapter 4 of~\cite{peleg2000distributed}, \alg{Upcast} Algorithm]
There exists a CONGEST algorithm where given $M$ messages (of $\Theta(\log n)$ bits) stored at different nodes and a spanning tree $T$ rooted at $u$, the messages can be collected at node $u$ in $O(\func{depth}(T) + M)$ rounds.
\end{lemma}
The key idea behind these two algorithms is to use a queue as a buffer to pipeline messages along the spanning tree.
Take \alg{Downcast} as an example.
Each node maintains a queue as a buffer.
In each round, the root sends one message to each of its children. 
Any other node enqueues the message received from its parent on the spanning tree, then dequeues the message at the head of the queue and forwards it to all of its children.
Under this strategy, the root releases one message per round, so it terminates after $M$ rounds.
Moreover, a message that leaves the root in round $r$ reaches its destination by round at most $r + \func{depth}(T)$.
The \alg{Upcast} algorithm proceeds in the opposite direction and can be analyzed similarly.

It is important to note that the conclusions for the \alg{Downcast} and \alg{Upcast} algorithms above are derived under the standard CONGEST model, where each edge can transmit only $O(1)$ messages of size $\Theta(\log n)$ bits per communication round.
If the total amount of information to be broadcast or collected is $B$ bits, and we relax the per-edge restriction to allow the transmission of $w$ messages of $p$ bits each per round, then the round complexities of both algorithms reduce to $O\left(\func{depth}(T) + \frac{B}{wp}\right)$ by grouping messages together.

\subsection{Implementations in the Rooted, Port-Aware MPGNN Model}
In fact, none of the above three algorithms uses the unique identifiers assumed in the CONGEST model. 
They only require that each node knows whether it is the root, and that each edge can decide whether to continue transmitting information according to whether it is a tree edge. 
Therefore, it is intuitive that all three CONGEST algorithms can also be implemented in our rooted, port-aware MPGNN model. 
The following lemma formalizes this claim.
\begin{lemma}[Rooted, Port-Aware MPGNN Implementations of \alg{Flood}, \alg{Downcast}, and \alg{Upcast}]
\label{lem:rooted-port-aware-congest-subroutines}
Let $G = (V, E)$ be an $n$-node graph with a designated root $r$ and diameter $D$. 
Let $\func{depth}(T)$ denote the depth of a rooted tree $T$, namely the maximum tree distance from the root to any node. 
Suppose that in each MPGNN layer, each direction on an edge can transmit at most $w$ words, and each word consists of $p$ bits, so each edge can transmit at most $wp$ bits per layer in each direction. 
Then the following CONGEST algorithms admit rooted, port-aware MPGNN implementations.
\begin{enumerate}[leftmargin = *]
    \item \textbf{\alg{Flood}.} There exists a rooted, port-aware MPGNN of network depth $d_{\alg{Flood}} = O(D)$ that constructs a spanning tree $T$ rooted at $r$. 
    The constructed tree satisfies $\func{depth}(T) = \max_{v\in V} d_G(r, v) = O(D)$.

    \item \textbf{\alg{Downcast}.} Suppose a spanning tree $T$ rooted at $r$ is already encoded in the port states, and suppose the root stores a payload of total length $B$ bits. 
    There exists a rooted, port-aware MPGNN of network depth $d_{\alg{Downcast}} = O\left(\func{depth}(T) + \left\lceil\frac{B}{wp}\right\rceil\right)$ that broadcasts the payload from $r$ to all nodes.

    \item \textbf{\alg{Upcast}.} Suppose a spanning tree $T$ rooted at $r$ is already encoded in the port states, and suppose the nodes jointly store payloads of total length $B$ bits. 
    There exists a rooted, port-aware MPGNN of network depth $d_{\alg{Upcast}} = O\left(\func{depth}(T) + \left\lceil \frac{B}{wp}\right\rceil\right)$ that collects all payloads at the root $r$.
\end{enumerate}
\end{lemma}
\begin{proof}
We describe how to simulate the three algorithms in the rooted, port-aware MPGNN model. 

For every node $u$ and every port $p\in P(u)$, the port state stores a role $\rho_{u, p}\in\set{\mathsf{unknown}, \mathsf{parent}, \mathsf{child}, \mathsf{offtree}}$\footnote{In practice, this role information is encoded using two bits; we write it symbolically here only for ease of exposition.}. 
By the definition of the rooted, port-aware model, node $u$ receives the port-indexed collection of incoming messages and port states. 
Hence it can distinguish its incident ports, compare their local indices, and update the corresponding port states jointly. 
We write $\iota_{u, p}\in [\deg(u)]$ for the local index of port $p$. 
This index is purely local and is not a unique node identifier.

Messages are encoded as bit strings of length at most $wp$. 
We reserve a constant number of control bits for flags such as $\mathsf{FLOOD}$, $\mathsf{ACK}$, and $\mathsf{DATA}$, and use the remaining bits for payload. 
This constant-size control overhead only changes the bandwidth by a constant factor and is absorbed in the $O(\cdot)$ bounds. 
The shared local transition rule determines, for each port, whether to send an empty message, a control token, or a data packet.

\begin{itemize}[leftmargin=*]
    \item For \alg{Flood}, initially only the root $r$ is active. 
    The root sends a $\mathsf{FLOOD}$ token through all ports. 
    When a non-root node $u$ receives a $\mathsf{FLOOD}$ token for the first time, it chooses one receiving port as its parent port. 
    If several tokens arrive in the same layer, $u$ chooses the port with the smallest local index $\iota_{u, p}$. 
    It marks this port as $\mathsf{parent}$, sends a $\mathsf{ACK}$ token back through this parent port, and forwards the $\mathsf{FLOOD}$ token through all other ports. 
    Whenever a node receives a $\mathsf{ACK}$ token through a port, it marks that port as $\mathsf{child}$. 
    After the flooding and acknowledgement phase has run for $O(D)$ layers, any port that has not been marked as $\mathsf{parent}$ or $\mathsf{child}$ is marked as $\mathsf{offtree}$.
    
    A node at graph distance $h$ from the root first receives the $\mathsf{FLOOD}$ token after $h$ layers, and therefore chooses as parent a neighbor at distance $h - 1$ from the root. 
    Hence the chosen parent pointers form a spanning tree rooted at $r$, and the tree depth is $\func{depth}(T) = \max_{v\in V} d_G(r, v)$. 
    Since every node is reached within at most $D$ layers, and the $\mathsf{ACK}$ confirmations only add a constant number of layers, the required network depth is $d_{\alg{Flood}} = O(D)$.

    \item For \alg{Downcast}, assume that a rooted spanning tree $T$ has already been encoded in the port states.
    The root splits its $B$-bit payload into $L = \left\lceil \frac{B}{wp}\right\rceil$ packets, each of size at most $wp$ bits. 
    Each node maintains a FIFO queue. 
    Initially, the root queue contains all $L$ packets, while other queues are empty. 
    In each layer, the root sends the head packet of its queue to all child ports. 
    Every non-root node accepts $\mathsf{DATA}$ messages only from its parent port, enqueues the received packet, and if its queue is nonempty, forwards the head packet to all child ports. 
    This behavior is implemented by the port-indexed local transition rule: the node ignores $\mathsf{DATA}$ messages received on non-parent ports and produces $\mathsf{DATA}$ messages only on child ports\footnote{For example, one can add a gating vector to each port and use the Hadamard product to control message passing.}.

    The resulting procedure is the standard pipelined broadcast on a rooted tree. 
    The root releases one packet per layer, and a packet needs at most $\func{depth}(T)$ additional layers to reach every node. 
    Therefore all $L = \left\lceil\frac{B}{wp}\right\rceil$ packets reach all nodes after $O\left(\func{depth}(T) + L\right) = O\left(\func{depth}(T) + \left\lceil\frac{B}{wp}\right\rceil\right)$ layers. 
    Hence the network depth is $d_{\alg{Downcast}} = O\left(\func{depth}(T) + \left\lceil\frac{B}{wp}\right\rceil\right)$.

    \item For \alg{Upcast}, again assume that the rooted spanning tree $T$ has already been encoded in the port states. 
    Each node stores a FIFO queue of bits rather than a FIFO queue of already fixed packets. 
    In each layer, every non-root node receives $\mathsf{DATA}$ bits from child ports, appends them to its queue in local port order, and sends through its parent port up to $wp$ bits from the head of its queue. 
    Thus small payloads can be packed together as they move upward. 
    The root receives the incoming bits from child ports and stores them in its collection buffer. 
    This is implemented by the port-indexed local transition rule: the node ignores $\mathsf{DATA}$ messages received on non-child ports and sends $\mathsf{DATA}$ messages only through its parent port.

    Every bit travels along the unique tree path from its source node to the root, and the path length of any bit is at most $\func{depth}(T)$. 
    The total amount of information is $B$ bits, so the total number of full $wp$-bit transmissions across any fixed tree edge is at most $\left\lceil \frac{B_e}{wp}\right\rceil$, where $B_e$ is the number of bits originating in the subtree below that edge. 
    Since $B_e\le B$ for every tree edge, the standard pipelining argument implies that all bits arrive at the root within $O(\func{depth}(T) + \left\lceil \frac{B}{wp}\right\rceil)$ layers. 
    Hence the network depth is $d_{\alg{Upcast}} = O\left(\func{depth}(T) + \left\lceil \frac{B}{wp}\right\rceil\right)$.
\end{itemize}
Therefore the three CONGEST algorithms have rooted, port-aware MPGNN implementations with the claimed network depths.
\end{proof}

\subsection{Proof of Theorem~\ref{thm:deterministic-upper}}
With these tools in hand, we are now ready to prove the upper bound in the theorem.
\DeterUpper*
\begin{proof}
    To prove this, we design a rooted, port-aware MPGNN with depth $d = O\left(D + \frac{m\log C}{wp}\right)$ and width $w$ words per edge per round and word precision $p$ bits.
    The main idea is to encode each WL-type canonically and sort the distinct encoded types at the root, and use the resulting rank as the new color, which automatically satisfies the hash function condition.
    We present the framework of our construction and analyze the depth for each step:
	\begin{enumerate}[leftmargin=*]
    	\item Each node $u$ sends a message $\bs{x}_u$ to its neighbors and receives messages from them to form its WL-type as $M_u = \left(\bs{x}_u, \mset{\bs{x}_v : v \in N(u)}\right)$. 
        This process can be done by a rooted, port-aware MPGNN with depth $O\left(\max\left\{1, \frac{\log C}{wp}\right\}\right) = O\left(1 + \frac{\log C}{wp}\right)$.
	    \item The root node $r$ initiates the \alg{Flood} algorithm to construct a BFS spanning tree rooted at $r$. 
        This process can be done by an MPGNN with depth $O(D)$.
        \item The \alg{Upcast} algorithm is used to collect all sets $M_u$ at the root node $r$ along the spanning tree.
        This process can be done by an MPGNN with depth $O\left(D + \frac{m \log C}{wp}\right)$, since there are $\sum_{u \in V} O(\deg(u)) = O(m)$ colors to gather, each of size $O(\log C)$ bits\footnote{We use a self-delimiting encoding of each WL-type, including its degree or separators. This adds $O(n\log n)$ bits, which is absorbed by $O(m\log C)$ for connected graphs and \(C\ge n\).}.
        \item The root node merges all $M_u$ and deduplicates them to form the set $K = \set{M_u : u \in V}$.
        \item The root node sorts $K$ according to an arbitrarily fixed total order on all WL-types, for example lexicographic order, and forms a mapping $\func{rk}: K \to [|K|]$ that assigns to each element of $K$ its rank in $[|K|]$, represented by the set of pairs $R = \set{(k, \func{rk}(k)) : k \in K}$.
        \item The \alg{Downcast} algorithm is used to send the mapping $R$ back to every node along the spanning tree.
        This process can be done by an MPGNN with depth $O\left(D + \frac{m\log C}{wp}\right)$, since the size of $R$ is $\sum_{u \in V} O(\deg(u)\log C) + O(n\log n) = O(m\log C)$ bits.
        \item Finally, each node determines its new color from its own WL-type and the returned mapping $\func{rk}$.
	\end{enumerate}
	Thus, one color-refinement step can be computed by an MPGNN with depth 
    \begin{equation*}
        d = O\left(1 + \frac{\log C}{wp} + D + D + \frac{m\log C}{wp} + D + \frac{m\log C}{wp}\right) = O\left(D + \frac{m\log C}{wp}\right).
    \end{equation*}
\end{proof}

\section{Introduction to the Bounded-Error Randomized Construction by Aamand et al.}
\label{app:introduction-aamand}
In this section, we introduce the MPGNN construction of Aamand et al.~\cite{aamand2022exponentially} for bounded-error randomized simulation of one color-refinement step.

Before presenting their result, we first note that their success criterion for simulating one color-refinement step differs from ours.
Specifically, for any fixed pair of nodes $u, v \in V$, they require that if $\WLtype_{G, \chi}(u) = \WLtype_{G, \chi}(v)$, then $\chi'(u) = \chi'(v)$ with probability $1$, and if $\WLtype_{G, \chi}(u) \neq \WLtype_{G, \chi}(v)$, then $\chi'(u) \neq \chi'(v)$ with probability at least $1 - q$, where $\chi$ is the input coloring, and $\chi'$ is the computed new coloring.
In contrast, in our formulation, simulating one color-refinement step with success probability at least $1 - p_f$ means producing the correct new colors of all nodes with probability at least $1 - p_f$.
By a union bound over all unordered node pairs, their pairwise guarantee implies our all-pairs correctness guarantee when setting $q = O\left(\frac{p_f}{n^2}\right)$.

In their construction, each node carries a color label $\bs{x}_u \in \set{0, 1}^{c\log n}$ for a constant $c$, which is equivalent to a color in $[n^{c_0}]$ for some constant $c_0$ and can be represented by $\Theta(\log n)$ bits.
Each node exchanges its color with its neighbors and performs local computation to obtain its new color, so the total message exchanged on each edge is $wp = \Theta(\log n)$ bits.
The construction requires access to public randomness: all nodes can read a shared random vector $\bs{b} \in [F]^{c\log n}$, where each coordinate is uniform in $[F]$.
In addition, the construction samples $t = c\log n$ $\epsilon$-biased vectors $\bs{a}_1, \cdots, \bs{a}_t \in \set{0, 1}^F$.
By Corollary~3.3 of~\cite{aamand2022exponentially}, each $\bs{a}_i$ can be generated using $O(\log F + \log(1/\epsilon))$ truly random bits.
To bound the number of random bits, note that their analysis bounds the failure probability by $\frac{1}{F}$, so to ensure $\frac{1}{F} \le q$ it suffices to take $F = \Theta\left(\frac{n^2}{p_f}\right)$, which implies $\log F = \Theta(\log n)$ when $p_f = \Omega\left(\frac{1}{\poly(n)}\right)$.
Finally, the parameters $c$ and $\epsilon$ are chosen appropriately to satisfy $\left(\frac{1 + \epsilon}{2}\right)^{c\log n} \le q = O\left(\frac{p_f}{n^2}\right)$, which yields the result.

\section{Hash Function Background and Proofs of Theorem~\ref{thm:randomized-upper-large-color} and Theorem~\ref{thm:randomized-upper-small-color}}
\label{app:proof-randomized-upper}
In this section, we present the proofs of Theorem~\ref{thm:randomized-upper-large-color} and Theorem~\ref{thm:randomized-upper-small-color}. 

\subsection{Introduction to Hash Functions}
We begin by introducing hash functions as a key proof tool.
Hash functions are widely used in computer science, for example in cryptography, pseudorandomness, and data structures.
Given two finite sets $U$ and $[M]$ (usually $|U| \gg M$), a hash family $\setclass{H}$ is a collection of functions from $U$ to $[M]$ (i.e., $\setclass{H} \subseteq [M]^{U}$).
Carter and Wegman~\cite{carter1977universal} defined universal hash functions as follows: given a hash family $\setclass{H}$ from $U$ to $[M]$, we say that $\setclass{H}$ is universal if for any two distinct inputs $x \neq y$ in $U$, the collision probability satisfies $\Pr{h(x) = h(y)} \le \frac{1}{M}$, where $h$ is sampled uniformly at random from $\setclass{H}$.
In some applications, it suffices to relax the bound $\frac{1}{M}$ and consider $\epsilon$-almost universal hashing: for $\frac{1}{M} \le \epsilon < 1$, we say that $\setclass{H}$ is $\epsilon$-almost universal if for any two distinct inputs $x \neq y$ in $U$, $\Pr{h(x) = h(y)} \le \epsilon$.

Suppose we want to design a universal hash family from $U = \set{0, 1}^u$ to $[M] = \set{0, 1}^t$.
A simple choice is the family of linear hash functions $\setclass{H} = \set{h_{\bs{A}}(\bs{x}) \coloneqq \bs{A}\bs{x} : \bs{A} \in \set{0, 1}^{t \times u}}$, where the multiplication between a binary matrix and a binary vector is over $\GF(2)$ (i.e., modulo $2$).
Sampling a hash function from $\setclass{H}$ requires $tu$ random bits to choose a uniformly random matrix $\bs{A}$.
If we allow $O\left(\frac{1}{M}\right)$-almost universal hashing, then a hash function can be sampled using only $\Theta(\log u + t)$ random bits.
This is formalized in Lemma~\ref{lem:hash}:
\begin{lemma}
    \label{lem:hash}
    For any constant $c > 1$, there exists a $\frac{c}{2^t}$-almost universal hash family $\setclass{H}$ from $\set{0, 1}^u$ to $\set{0, 1}^t$, and sampling $h \in \setclass{H}$ can be done using $\Theta(\log u + t)$ random bits.
\end{lemma}
\begin{proof}
We define a family $\setclass{H} = \set{h_{x, y} : x, y \in \GF(2^b)}$ where each $h_{x, y}$ is a linear map of the form $h_{x, y}(\bs{z}) = \bs{A}(x, y)\bs{z}$ for some binary matrix $\bs{A}(x, y) \in \set{0, 1}^{t \times u}$ generated from the seed $(x, y)$.
We will choose $b$ later such that the family is $\frac{c}{2^t}$-almost universal.

We first fix $b$ and a binary bijective representation map $\func{bin}: \GF(2^b) \to \set{0, 1}^b$ such that $\func{bin}(0) = (0, 0, \cdots, 0)$ and $\func{bin}(u + v) = \func{bin}(u) \oplus \func{bin}(v)$ for all $u, v \in \GF(2^b)$.
For example, viewing an element $\alpha \in \GF(2^b)$ as a binary polynomial $a_1 + a_2x + \cdots + a_{b}x^{b - 1}$ of degree at most $b - 1$ with $a_i \in \set{0, 1}$, one can set $\func{bin}(\alpha) = (a_1, a_2, \cdots, a_{b}) \in \set{0, 1}^b$.

Let $N = tu \ge 2$. 
We generate a matrix $\bs{A} = \bs{A}(x, y)$ using limited randomness as follows:
\begin{enumerate}[leftmargin=*]
    \item Sample two elements $x, y$ independently and uniformly at random from $\GF(2^b)$.
    \item For $i = 1, \cdots, N$, let $r_i \coloneqq \langle \func{bin}(x^{i - 1}), \func{bin}(y)\rangle$, where $\langle \cdot, \cdot \rangle$ denotes the inner product over $\GF(2)$.
    Concretely, for $\bs{a} = (\bs{a}_1, \cdots, \bs{a}_{b})$ and $\bs{b} = (\bs{b}_1, \cdots, \bs{b}_{b})$ in $\set{0, 1}^b$, we define $\langle \bs{a}, \bs{b}\rangle \coloneqq \sum_{j = 1}^{b} \bs{a}_j \bs{b}_j \bmod 2$.
    \item Fill the bits $r_1, \dots, r_{N}$ into the matrix $\bs{A}$ row by row, i.e., for each $j \in [t]$ and $k \in [u]$ we set $\bs{A}_{j, k} \coloneqq r_{(j - 1)u + k}$.
\end{enumerate}
Let $\bs{r} = (r_1, \dots, r_{N}) \in \set{0, 1}^N$ denote this sampled string.
We now analyze the collision probability.

Fix any two distinct inputs $\bs{x}, \bs{x}' \in \set{0, 1}^u$.
A collision under $h_{x, y}$ occurs when
$\bs{A}\bs{x} = \bs{A}\bs{x}' \iff \bs{A}(\bs{x} \oplus \bs{x}') = \bs{0} \in \set{0, 1}^t$.
Since $\bs{x} \neq \bs{x}'$, letting $\bs{d} \coloneqq \bs{x} \oplus \bs{x}' \neq \bs{0}$, the collision probability reduces to the probability that the random matrix $\bs{A}$ maps a fixed nonzero vector $\bs{d} \in \set{0, 1}^u$ to the zero vector.

Define a random variable $\bs{z} \coloneqq \bs{A}\bs{d}\in \set{0, 1}^t$. 
For each $\bs{s} \in \set{0, 1}^t$, define $\chi_{\bs{s}}(\bs{z}) = (-1)^{\left\langle \bs{s}, \bs{z}\right\rangle}$, where $\left\langle \bs{s}, \bs{z}\right\rangle$ denotes the mod-$2$ inner product.
It is straightforward to verify the identity
$\mathbb{I}\left[\bs{z} = \bs{0}\right] = 2^{-t}\sum_{\bs{s} \in \set{0, 1}^t}\chi_{\bs{s}}(\bs{z})$.
Then
\begin{equation}
    \label{eqn:collision}
    \begin{aligned}
        \Pr{h_{x,y}(\bs{x}) = h_{x,y}(\bs{x}')} = \Pr{\bs{z} = \bs{0}}
        & = \E{\mathbb{I}\left[\bs{z} = \bs{0}\right]} \\
        & = 2^{-t}\sum_{\bs{s} \in \set{0, 1}^t}\E{\chi_{\bs{s}}(\bs{z})} \\
        & = 2^{-t}\left(\E{\chi_{\bs{0}}(\bs{z})} + \sum_{\bs{0} \neq \bs{s} \in \set{0, 1}^t}\E{\chi_{\bs{s}}(\bs{z})}\right) \\
        & \le 2^{-t}\left(1 + (2^t - 1) \max_{\bs{0} \neq \bs{s} \in \set{0, 1}^t}\left|\E{\chi_{\bs{s}}(\bs{z})}\right|\right) \\
        & \le 2^{-t} + \max_{\bs{0} \neq \bs{s} \in \set{0, 1}^t}\left|\E{\chi_{\bs{s}}(\bs{z})}\right|.
    \end{aligned}
\end{equation}
It remains to bound $\max_{\bs{0} \neq \bs{s} \in \set{0, 1}^t}\left|\E{\chi_{\bs{s}}(\bs{z})}\right|$.
We first rewrite $\langle \bs{s}, \bs{z}\rangle$ as a linear test on $\bs{r}$.
For any fixed $\bs{s}\in \set{0,1}^t$ and $\bs{d}\in \set{0,1}^u$,
\begin{align*}
    \langle \bs{s}, \bs{z}\rangle
    = \sum_{j = 1}^{t} \bs{s}_j \bs{z}_j \bmod 2
    = \sum_{j = 1}^{t} \bs{s}_j \left(\sum_{k = 1}^{u} \bs{A}_{j, k} \bs{d}_k \right) \bmod 2
    = \sum_{j = 1}^{t}\sum_{k = 1}^{u} (\bs{s}_j \bs{d}_k) \bs{A}_{j, k} \bmod 2.
\end{align*}
Define $\bs{w} = \bs{w}(\bs{s}, \bs{d})\in \set{0, 1}^N$ by $\bs{w}_{(j - 1)u + k} \coloneqq \bs{s}_j \bs{d}_k$ for all $j \in [t]$ and $k \in [u]$.
Since $r_{(j - 1)u + k} = \bs{A}_{j, k}$ by construction, we have $\langle \bs{s}, \bs{z}\rangle = \langle \bs{w}, \bs{r}\rangle$, and therefore $\E{\chi_{\bs{s}}(\bs{z})} = \E{(-1)^{\langle \bs{s}, \bs{z}\rangle}} = \E{(-1)^{\langle \bs{w}, \bs{r}\rangle}}$.
Moreover, because $\bs{d}\neq \bs{0}$, we have $\bs{w}\neq \bs{0}$ whenever $\bs{s}\neq \bs{0}$.
Hence it suffices to upper bound $\left|\E{(-1)^{\langle \bs{w}, \bs{r}\rangle}}\right|$ for all $\bs{0}\neq \bs{w}\in \set{0,1}^N$.

Proposition~3 of~\cite{alon1992simple} implies
$\max_{\bs{0}\neq \bs{w}\in \set{0, 1}^N}\left|\E{(-1)^{\langle \bs{w}, \bs{r}\rangle}}\right| \le \frac{N - 1}{2^b}$.
For completeness, we prove this bound below.

Fix some $\bs{w} \in \set{0, 1}^N \setminus \set{\bs{0}}$, and consider the linear test
$\left\langle \bs{w}, \bs{r}\right\rangle = \sum_{i = 1}^{N} \bs{w}_i \bs{r}_i \bmod 2$.
In the sampling procedure above, $\bs{r} \in \set{0, 1}^{N}$ is a function of the field elements $x, y \in \GF(2^b)$.
Thus, we can rewrite
\begin{align*}
    \left\langle \bs{w}, \bs{r}(x, y)\right\rangle
    & = \sum_{i = 1}^{N} \bs{w}_i \left\langle \func{bin}(x^{i - 1}), \func{bin}(y)\right\rangle \\
    & = \left\langle \func{bin}\left(\sum_{i = 1}^{N} \bs{w}_i x^{i - 1}\right), \func{bin}(y)\right\rangle,
\end{align*}
where the second equality uses linearity of the inner product in its first argument and the property of $\func{bin}$.
Let $p_{\bs{w}}(X) \coloneqq \sum_{i = 1}^{N}\bs{w}_i X^{i - 1}$ be a polynomial over $\GF(2)$, and then $\left\langle \bs{w}, \bs{r}(x, y)\right\rangle = \left\langle \func{bin}(p_{\bs{w}}(x)), \func{bin}(y)\right\rangle$.
We consider two cases:
\begin{itemize}[leftmargin=*]
    \item If $p_{\bs{w}}(x) = 0$, then we have $\langle \bs{w}, \bs{r}(x, y)\rangle = \langle \func{bin}(0), \func{bin}(y) \rangle = 0$.
    Hence the probability $\Pr{\langle \bs{w}, \bs{r}(x, y)\rangle = 1 \Given p_{\bs{w}}(x) = 0} = 0$.

    \item If $p_{\bs{w}}(x) \neq 0$, then for any fixed nonzero vector $\func{bin}(p_{\bs{w}}(x)) \in \set{0, 1}^b$, exactly half of the $\func{bin}(y) \in \set{0, 1}^b$ make the inner product
    $\langle \func{bin}(p_{\bs{w}}(x)), \func{bin}(y)\rangle$ equal to $0$.
    Since $y$ is uniform over $\GF(2^b)$ and $\func{bin}$ is a bijection, $\func{bin}(y)$ is uniform over $\set{0, 1}^b$.
    Therefore,
    $\Pr{\langle \bs{w}, \bs{r}(x, y)\rangle = 0 \Given p_{\bs{w}}(x) \neq 0}
    = \Pr{\langle \bs{w}, \bs{r}(x, y)\rangle = 1 \Given p_{\bs{w}}(x) \neq 0}
    = \frac{1}{2}$.
\end{itemize}
Since $\deg(p_{\bs{w}}) \le N - 1$ and $p_{\bs{w}}$ is nonzero, it has at most $N - 1$ roots in $\GF(2^b)$, and thus $\Pr{p_{\bs{w}}(x) = 0} \le \frac{N - 1}{2^b}$.
By the law of total probability,
\begin{align*}
    \Pr{\langle \bs{w}, \bs{r}(x, y)\rangle = 1}
    &= \Pr{\langle \bs{w}, \bs{r}(x, y)\rangle = 1 \Given p_{\bs{w}}(x) = 0}\Pr{p_{\bs{w}}(x) = 0} \\
    &\quad + \Pr{\langle \bs{w}, \bs{r}(x, y)\rangle = 1 \Given p_{\bs{w}}(x) \neq 0}\Pr{p_{\bs{w}}(x) \neq 0} \\
    &= 0 + \frac{1}{2}\left(1 - \Pr{p_{\bs{w}}(x) = 0}\right)
     = \frac{1}{2} - \frac{1}{2}\Pr{p_{\bs{w}}(x) = 0},
\end{align*}
and similarly,
\begin{align*}
    \Pr{\langle \bs{w}, \bs{r}(x, y)\rangle = 0}
    &= \Pr{p_{\bs{w}}(x) = 0} + \frac{1}{2}\left(1 - \Pr{p_{\bs{w}}(x) = 0}\right)
     = \frac{1}{2} + \frac{1}{2}\Pr{p_{\bs{w}}(x) = 0}.
\end{align*}
Therefore,
\begin{align*}
    \left|\E{(-1)^{\langle \bs{w}, \bs{r}(x, y)\rangle}}\right|
    &= \left|1\cdot \Pr{\langle \bs{w}, \bs{r}\rangle = 0} - 1\cdot \Pr{\langle \bs{w}, \bs{r}\rangle = 1}\right| \\
    &= \Pr{p_{\bs{w}}(x) = 0}
    \le \frac{N - 1}{2^b}.
\end{align*}
This proves $\max_{\bs{0}\neq \bs{w}\in \set{0,1}^N}\left|\E{(-1)^{\langle \bs{w}, \bs{r}\rangle}}\right|\le \frac{N - 1}{2^b}$, and hence $\max_{\bs{0}\neq \bs{s}\in \set{0,1}^t}\left|\E{\chi_{\bs{s}}(\bs{z})}\right|\le \frac{N - 1}{2^b}$.
Substituting into Equation~\eqref{eqn:collision}, we obtain $\Pr{h_{x, y}(\bs{x}) = h_{x, y}(\bs{x}')} \le 2^{-t} + \frac{N - 1}{2^b}$.
To make this probability at most $\frac{c}{2^t}$, it suffices to ensure $\frac{N - 1}{2^b} \le \frac{c - 1}{2^t}$, namely $2^b \ge \frac{N - 1}{c - 1}2^t$, equivalently
$b \ge t + \log_2(N - 1) - \log_2(c - 1)$.
Thus, setting $b \coloneqq \left\lceil t + \log_2(tu - 1) - \log_2(c - 1) \right\rceil$ satisfies the desired collision-probability bound, and $\setclass{H}$ is $\frac{c}{2^t}$-almost universal.

Finally, sampling $h_{x, y}$ requires sampling $(x, y)$ uniformly from $\GF(2^b)^2$, which takes $R = 2b = \Theta(t + \log(tu)) = \Theta(\log u + t)$ random bits.
\end{proof}

\subsection{Proof of Theorem~\ref{thm:randomized-upper-large-color} and Theorem~\ref{thm:randomized-upper-small-color}}

Now we can prove Theorem~\ref{thm:randomized-upper-large-color} and Theorem~\ref{thm:randomized-upper-small-color} using Lemma~\ref{lem:hash} as our main tool.
\RandUpperLarge*
\begin{proof}
    We construct an anonymous unordered-neighborhood MPGNN with depth $d = 1$, width $w$ words per edge per round, and word precision $p$ bits.
    We outline the construction and analyze the depth of each step:
    \begin{enumerate}[leftmargin=*]
        \item Each node $u$ sends its current color $\bs{x}_u$ to its neighbors and receives their colors to form its WL-type $(\bs{x}_u, \mset{\bs{x}_v : v \in N(u)})$.
        This step can be done by a model with depth $1$ since $wp \ge \left\lceil \log_2 C\right\rceil$.
        \item Each node encodes its WL-type into a binary vector in a canonical way.
        For example, it can first encode the WL-type as a vector in $\set{0, 1, \cdots, C}^n$ by setting the first entry to $\bs{x}_u$, the next $\deg(u) \le n - 1$ entries to the sorted multiset $\mset{\bs{x}_v : v \in N(u)}$, and padding the remaining entries with a special symbol $0$ not used as a color.
        It then converts each entry into $\left\lceil\log_2(C + 1)\right\rceil$ bits, yielding a binary vector $\bs{c}_u \in \set{0, 1}^{n\left\lceil\log_2(C + 1)\right\rceil}$.
        \item Let $F \coloneqq \left\lceil\frac{3n^2}{2p_f}\right\rceil \le C$. 
        We apply Lemma~\ref{lem:hash} to construct a $\frac{2}{2^{\left\lceil\log_2 F\right\rceil}}$-almost universal hash family $\setclass{H}_1$ from $\set{0, 1}^{n\left\lceil\log_2(C + 1)\right\rceil}$ to $\set{0, 1}^{\left\lceil\log_2 F\right\rceil}$, and use $\Theta\left(\log\left(n\log C\right) + \log F\right) = \Theta\left(\log \frac{n\log C}{p_f}\right)$ random bits to sample a hash function $h_1 \in \setclass{H}_1$.
        \item We then construct another universal hash family $\setclass{H}_2$ from $\set{0, 1}^{\left\lceil\log_2 F\right\rceil}$ to $[F]$ via a standard construction (e.g., $\set{h_{a, b}(x) = 1 + ((ax + b) \bmod P) \bmod F : a \in [P - 1], b \in \set{0, 1, \cdots, P - 1}}$ for some prime $P \ge 2^{\left\lceil\log_2 F\right\rceil}$), and use $\Theta(\log F) = \Theta\left( \log\frac{n}{p_f}\right)$ random bits to sample a function $h_2 \in \setclass{H}_2$.
        \item Finally, we set the new color of node $u$ to be $\bs{y}_u \coloneqq h_2(h_1(\bs{c}_u))$.
    \end{enumerate}
    Next, we bound the collision probability. 
    Fix any $u, v \in V$ such that $\bs{c}_u \neq \bs{c}_v$. 
    We have
    \begin{align*}
        \Pr{\bs{y}_u = \bs{y}_v}
        & = \Pr{h_2(h_1(\bs{c}_u)) = h_2(h_1(\bs{c}_v)) \Given h_1(\bs{c}_u) = h_1(\bs{c}_v)}\Pr{h_1(\bs{c}_u) = h_1(\bs{c}_v)} \\
        & \quad + \Pr{h_2(h_1(\bs{c}_u)) = h_2(h_1(\bs{c}_v)) \Given h_1(\bs{c}_u) \neq h_1(\bs{c}_v)}\Pr{h_1(\bs{c}_u) \neq h_1(\bs{c}_v)} \\
        & \le \Pr{h_1(\bs{c}_u) = h_1(\bs{c}_v)} + \frac{1}{F}\left(1 - \Pr{h_1(\bs{c}_u) = h_1(\bs{c}_v)}\right) \\
        & \le \frac{1}{F} + \left(1 - \frac{1}{F}\right)\Pr{h_1(\bs{c}_u) = h_1(\bs{c}_v)} \\
        & \le \frac{1}{F} + \left(1 - \frac{1}{F}\right)\frac{2}{2^{\left\lceil\log_2 F\right\rceil}}
        \le \frac{3}{F}.  
    \end{align*}
    
    By a union bound, the overall failure probability is
    \begin{align*}
        \Pr{\exists u \neq v \in V \text{ such that } \bs{c}_u \neq \bs{c}_v \text{ and } \bs{y}_u = \bs{y}_v}
        & \le \sum_{\set{u, v} \subseteq V} \Pr{\bs{y}_u = \bs{y}_v \Given \bs{c}_u \neq \bs{c}_v}
        \le \frac{3n^2}{2F}
        \le p_f.
    \end{align*}
\end{proof}

We now prove Theorem~\ref{thm:randomized-upper-small-color}.
\RandUpperSmall*
\begin{proof}
    We prove this theorem by constructing a rooted, port-aware MPGNN with bandwidth $w$ words per edge per round and word precision $p$ bits.
    The construction proceeds as follows:
    \begin{enumerate}[leftmargin=*]
        \item First, we can use a rooted, port-aware MPGNN of depth $O\left(1 + \frac{\log C}{wp}\right)$ with $\Theta\left(\log\frac{n\log C}{p_f}\right)$ random bits, where each node first forms its WL-type and then maps the canonical encoding of this WL-type to a temporary color $\bs{y}_u \in \left[\left\lceil\frac{3n^2}{2p_f}\right\rceil\right]$.
        \item The root node $r$ initiates \alg{Flood} to construct a BFS spanning tree rooted at $r$.
        This can be done by a model with depth $O(D)$.
        \item We use \alg{Upcast} to collect all node colors $\bs{y}_u$ at the root node $r$ along the spanning tree.
        This can be done by a model with depth $O\left(D + \frac{n \log (n / p_f)}{wp}\right)$ rounds, since there are $O(n)$ colors to gather, each of size $O\left(\log \left\lceil\frac{3n^2}{2p_f}\right\rceil\right) = O\left(\log\frac{n}{p_f}\right)$ bits, and each edge can transmit $wp$ bits per round.
        \item The root node merges and deduplicates the colors $\bs{y}_u$ to form the set $K = \set{\bs{y}_u : u \in V}$, then sorts $K$ and assigns new colors according to rank.
        Concretely, it produces a set $R = \set{(\bs{y}_u, \bs{z}_u) : u \in V}$, where $\bs{z}_u \coloneqq \left|\set{\bs{y}_v: v \in V, \bs{y}_v < \bs{y}_u}\right| + 1$ is the rank of $\bs{y}_u$ among the distinct values in $\set{\bs{y}_v : v \in V}$.
        \item The \alg{Downcast} algorithm is used to send the mapping $R$ back to every node along the spanning tree.
        This process can be carried out by an MPGNN with depth $O\left(D + \frac{n\log (n / p_f)}{wp}\right)$, since there are $O(n)$ messages to transmit, each of size $O(\log (n / p_f) + \log n) = O(\log (n / p_f))$ bits.
        \item Finally, each node determines its new color from its own color $\bs{y}_u$ and the returned mapping $R$.
    \end{enumerate}
    The algorithm succeeds whenever the hashing subroutine is collision-free on the WL-types that occur. 
    Therefore the failure probability is at most $p_f$.
\end{proof}

\section{Proof of Theorem~\ref{thm:randomized-lower-randomness}}
\label{app:proof-randomized-lower-randomness}
In this section, we prove Theorem~\ref{thm:randomized-lower-randomness}.
\RandLowerRandomness*
\begin{proof}
    We reduce $\func{EQ}_n$ to simulating one color-refinement step. 
    We use the hard-instance construction from the proof of Theorem~\ref{thm:deterministic-lower} with parameters $m = n$ and $D = 8$.
    Therefore, we have a graph family $\mathcal{G}_n = \set{G_{\bs{a},\bs{b}} : \bs{a}, \bs{b} \in \set{0, 1}^n}$ consisting of connected uncolored graphs with $\Theta(n)$ nodes and $\Theta(n)$ edges. 
    Moreover, the construction has an Alice-Bob partition whose cut contains exactly one graph edge, and it contains two distinguished nodes $x^{(A)}$ and $x^{(B)}$ such that, if $\chi_G^{(t)}$ denotes the coloring after $t$ rounds of $1$-WL starting from the uniform coloring, then $\chi_G^{(3)}(x^{(A)}) = \chi_G^{(3)}(x^{(B)})$ if and only if $\bs{a} = \bs{b}$.

    Now suppose that there exists a randomized oblivious MPGNN $\mathcal{M}$ with depth $d$, width $w$, precision $p$, and $R$ random bits that simulates one color-refinement step with failure probability at most $p_f$, and suppose further that $dwp = O(\log C)$.
    Alice and Bob use $\mathcal M$ three times in sequence to simulate three color-refinement steps on $G_{\bs a,\bs b}$. 
    The composed simulator has depth $3d$ and uses $3R$ random bits. 
    Using three independent copies of $\mathcal{M}$ and conditioning on the correctness of the previous simulated rounds, all three rounds are correct with probability at least $(1 - p_f)^3 > 1 - 3p_f > 2/3$.
    During this simulation, the Alice-Bob cut contains exactly one graph edge. 
    Hence each message-passing layer requires only $O(wp)$ bits of communication across the cut. 
    Over $3d$ layers, the total communication used to simulate the MPGNN is $O(dwp)$. 
    After the simulation, Alice and Bob compare the output colors of $x^{(A)}$ and $x^{(B)}$. 
    Since the output set is $[C]$, this costs another $O(\log C)$ bits. 
    Therefore, we obtain a randomized public-coin protocol for $\func{EQ}_n$ with communication cost $k = O(dwp + \log C) = O(\log C)$, randomness $3R$, and failure probability at most $3p_f < 1/3$.

    By the limited-randomness lower bound for Equality~\cite{jacobs2025communication}, any randomized public-coin protocol for $\func{EQ}_n$ using $\rho$ random bits and failure probability $\varepsilon$ must have communication cost $k\ge \Omega\left(\min\left\{n,\frac{n}{\varepsilon 2^\rho}\right\}\right)$.
    Applying this bound with $\rho = 3R$ and $\varepsilon = 3p_f$ gives $k\ge \Omega\left(\min\left\{n,\frac{n}{p_f2^{3R}}\right\}\right)$.
    Since $k = O(\log C)$ and $\log C = o(n)$, the first alternative $k = \Omega(n)$ is impossible for sufficiently large $n$. 
    Hence we must have $O(\log C)\ge \Omega\left(\frac{n}{p_f2^{3R}}\right)$.
    Rearranging gives $2^{3R}\ge \Omega\left(\frac{n}{p_f\log C}\right)$. 
    Taking logarithms yields $R = \Omega\left(\log\frac{n}{p_f\log C}\right)$, as claimed.
\end{proof}

\section{ExistsEqual Problem and Proof of Theorem~\ref{thm:randomized-lower-small-color}}
\label{app:proof-randomized-lower-small-color}
In this section, we prove Theorem~\ref{thm:randomized-lower-small-color}.
Before that, we introduce a communication complexity problem called $\func{ExistsEqual}_{U, k}$ (or $\exists \func{EQ}_{U, k}$ for short).
\begin{definition}[$\func{ExistsEqual}_{U, k}$, cf.~\cite{huang2020communication}]
    Given a set $U$ and a positive integer $k$, Alice and Bob hold vectors $\bs{x} \in U^k$ and $\bs{y} \in U^k$, respectively. 
    They must determine whether there exists an index $i \in [k]$ such that $\bs{x}_i = \bs{y}_i$.
\end{definition}
The communication complexity of $\func{ExistsEqual}_{U, k}$ has been studied extensively.
For error probability $p_f = \Theta(1)$ and for $p_f = \Omega\left(\frac{1}{\poly(k)}\right)$, Saglam and Tardos~\cite{saglam2013communication} and Brody et al.~\cite{brody2016certifying} showed that, when $|U| = \Omega(k)$, the communication complexity of $\func{ExistsEqual}_{U, k}$ is $\Omega\left(k\log^{(r)}k\right)$, where $\log^{(i)}(x)$ denotes the iterated logarithm, defined by $\log^{(1)}(x) \coloneqq \log x$ and $\log^{(i)}(x) \coloneqq \log\left(\log^{(i-1)}(x)\right)$ for $i \ge 2$, and $r$ denotes the number of communication rounds.

We are now ready to present the proof of Theorem~\ref{thm:randomized-lower-small-color}.
\RandLowerSmallColor*
\begin{proof}
    We reduce the $\exists\func{EQ}$ problem to simulating one color-refinement step. 
    The proof follows the same high-level idea as the proof of Theorem~\ref{thm:deterministic-lower}. 
    We first construct a virtually colored hard instance, and then encode the virtual colors by node degrees using shared padding and regularizer nodes.

    \paragraph{Construction.}
    We reduce from $\exists\func{EQ}_{U, n}$, where $U$ is the set of all multisets of size $2$ over $[n]$. 
    Thus $|U| = \binom{n + 1}{2} = \Theta(n^2)$. 

    \textbf{\textit{Step One: Fixed Basic Graph.}}
    For each side $\sigma \in \set{A, B}$, create nodes $x^{(\sigma)}$, nodes $w^{(\sigma)}_1, \cdots, w^{(\sigma)}_n$, and nodes $u^{(\sigma)}_{i, 1}, u^{(\sigma)}_{i, 2}$ for every $i \in [n]$. 
    Add edges $\set{x^{(\sigma)}, w^{(\sigma)}_i}$, $\set{w^{(\sigma)}_i, u^{(\sigma)}_{i, 1}}$, and $\set{w^{(\sigma)}_i, u^{(\sigma)}_{i, 2}}$ for every $i \in [n]$.
    Then, connect the two sides by a path of length $\Theta(D)$, for example by adding nodes $c_1, \cdots, c_{D}$ and edges $\set{x^{(A)}, c_1}, \set{c_1, c_2}, \cdots, \set{c_{D}, x^{(B)}}$.
    Figure \ref{fig:randomized-lower} below gives an illustration of the fixed basic graph, where the connecting path is simplified.
    \begin{figure*}[ht]
    	\centering
    	\begin{tikzpicture}[scale = 0.8, transform shape]
    		\tikzset{every node/.style = {minimum size = 1.0cm, thick, inner sep = 0pt}}
    
    		\tikzset{uNode/.style = {circle, draw, minimum size = 0.75cm}}
    
    		\node[circle, draw] (xA) at (-.8, 0) {$x^{(A)}$};
    		\foreach \i in {1, ..., 5}
    		{                
    			\ifnum \i = 4
    				\node (wA\i) at (-3, {2 * 3 - 2 * \i}) {$\dots$};
    			\else
    				\node[circle, draw] (wA\i) at (-3, {2 * 3 - 2 * \i})
    				{
    					\ifnum \i < 4 $w^{(A)}_{\i}$ \else $w^{(A)}_{n}$ \fi
    				};
    			\fi
    		}
    		\foreach \i in {1, 2, 3, 5}
    		{
    			\draw[thick] (xA) -- (wA\i);
    		}
    
    		\foreach \i in {1, 2, 3, 5}
    		{
    			\ifnum\i<4 \def\idx{\i} \else \def\idx{n} \fi
    			\node[uNode] (uA\i1) at (-5, {2 * 3 - 2 * \i + 0.5}) {$u^{(A)}_{\idx, 1}$};
    			\node[uNode] (uA\i2) at (-5, {2 * 3 - 2 * \i - 0.5}) {$u^{(A)}_{\idx, 2}$};
    			\draw[thick] (wA\i) -- (uA\i1);
    			\draw[thick] (wA\i) -- (uA\i2);
    		}
    
    		\node[circle, draw] (xB) at (.8, 0) {$x^{(B)}$};
    		\foreach \i in {1, ..., 5}
    		{                
    			\ifnum \i = 4
    				\node (wB\i) at (3, {2 * 3 - 2 * \i}) {$\dots$};
    			\else
    				\node[circle, draw] (wB\i) at (3, {2 * 3 - 2 * \i})
    				{
    					\ifnum \i < 4 $w^{(B)}_{\i}$ \else $w^{(B)}_{n}$ \fi
    				};
    			\fi
    		}
    		\foreach \i in {1, 2, 3, 5}
    		{
    			\draw[thick] (xB) -- (wB\i);
    		}
    
    		\foreach \i in {1, 2, 3, 5}
    		{
    			\ifnum\i<4 \def\idx{\i} \else \def\idx{n} \fi
    			\node[uNode] (uB\i1) at (5, {2 * 3 - 2 * \i + 0.5}) {$u^{(B)}_{\idx, 1}$};
    			\node[uNode] (uB\i2) at (5, {2 * 3 - 2 * \i - 0.5}) {$u^{(B)}_{\idx, 2}$};
    			\draw[thick] (wB\i) -- (uB\i1);
    			\draw[thick] (wB\i) -- (uB\i2);
    		}
    
    		\draw[thick, decorate, decoration={snake, amplitude=1mm, segment length = 2mm}] (xA) -- (xB);
    		\draw [dashed, very thick] (0, 5) -- (0, -5);
    
    		\node[anchor = south east] at (-1, 5.) {\Large Alice};
    		\node[anchor = south west] at (1, 5.) {\Large Bob};
    	\end{tikzpicture}
        \caption{The constructed basic graph. The connecting path is simplified.}
        \label{fig:randomized-lower}
    \end{figure*}

    \textbf{\textit{Step Two: Virtually Colored Hard Instance.}}
    Given an input instance $(\bs{a}, \bs{b})\in U^n\times U^n$, Alice and Bob assign virtual colors as follows. 
    For every $i \in [n]$, set $\kappa\left(w^{(A)}_i\right) = \kappa\left(w^{(B)}_i\right) = i$. 
    If $\bs{a}_i = \mset{\alpha, \beta}$ with $\alpha \le \beta$, then Alice sets $\kappa\left(u^{(A)}_{i, 1}\right) = \alpha$ and $\kappa\left(u^{(A)}_{i, 2}\right) = \beta$. 
    Bob does the analogous assignment using $\bs{b}_i$. 
    The nodes $x^{(A)}$, $x^{(B)}$ and connecting nodes $c_i$ do not need virtual colors.

    \textbf{\textit{Step Three: Decolorization.}}
    We now decolorize this virtually colored graph separately on Alice's side and Bob's side, so no new edge crosses the Alice-Bob cut. 
    Let $Z^{(\sigma)} \coloneqq \set{w_{i}^{(\sigma)}: i \in [n]} \cup \set{u_{i, 1}^{(\sigma)}, u_{i, 2}^{(\sigma)}: i \in [n]}$ be the set of virtually colored nodes on side $\sigma$, and let $K\coloneqq n$ be the number of virtual colors used. 
    Let $\Delta_0 = 3$ be an upper bound on the degrees of nodes in $Z^{(\sigma)}$ before decolorization. 
    For every virtual color $k \in [K]$, define a distinct target degree $T_k = \Delta_0 + k = 3 + k$, and define the padding target degree $T_\perp = 3n + 1$ larger than $|Z^{(\sigma)}|$.
    Then all $T_k$ are pairwise distinct, all are larger than $\Delta_0$, and none equals $T_\perp$.
    
    For each $z\in Z^{(\sigma)}$, let $d_0(z)$ be its current degree before decolorization and set $p_z = T_{\kappa(z)} - d_0(z)$. 
    Since $T_{\kappa(z)} > \Delta_0 \ge d_0(z)$, we have $p_z > 0$. 
    Create shared padding nodes $P^{(\sigma)}_1, \ldots,P^{(\sigma)}_{M^{(\sigma)}}$, where $M^{(\sigma)} = \max_{z\in Z^{(\sigma)}}p_z$, and connect each $z\in Z^{(\sigma)}$ to $P^{(\sigma)}_1,\ldots, P^{(\sigma)}_{p_z}$. 
    Thus every virtually colored node $z$ obtains degree exactly $T_{\kappa(z)}$.

    We further modify the degrees of the padding nodes to ensure that they have the same first-round WL color.
    Let $s^{(\sigma)}_i \coloneqq \left|\set{z\in Z^{(\sigma)} : p_z\ge i}\right|$ be the number of neighbors currently adjacent to $P^{(\sigma)}_i$. 
    Since $s^{(\sigma)}_i\le \left|Z^{(\sigma)}\right| = 3n < T_\perp$, we can regularize the degrees of the padding nodes to $T_{\perp}$. 
    Let $R^{(\sigma)} \coloneqq \max_{i\in [M^{(\sigma)}]} (T_\perp - s^{(\sigma)}_i)$. 
    Create regularizer nodes $Q^{(\sigma)}_1, \cdots,Q^{(\sigma)}_{R^{(\sigma)}}$. 
    Since $R^{(\sigma)} > 0$, there exists at least one regularizer node on each side.
    For each $i\in [M^{(\sigma)}]$, connect $P^{(\sigma)}_i$ to $Q^{(\sigma)}_1, \cdots, Q^{(\sigma)}_{T_\perp - s^{(\sigma)}_i}$. 
    Then every padding node has degree exactly $s^{(\sigma)}_i + (T_\perp - s^{(\sigma)}_i) = T_\perp$. 
    Moreover, every regularizer node is adjacent to some padding node by the definition of $R^{(\sigma)}$, so this step does not create isolated nodes. 
    
    Finally, add equal numbers of filler leaves adjacent to $x^{(A)}$ and $x^{(B)}$, if necessary, so that $\deg(x^{(A)}) = \deg(x^{(B)}) = 3n + 2$ and this common degree is larger than $T_\perp$. 
    We denote the resulting graph by $G_{\bs{a},\bs{b}}$, and let $\mathcal{G}_{n, D}$ be the family of all such graphs.

    \paragraph{Analysis of Graph Parameters.}
    Similar to the proof of Theorem~\ref{thm:deterministic-lower}, the decolorized graph is connected and has $\Theta(n)$ nodes and $\Theta(D)$ diameter.
    
    \paragraph{Analysis of WL Colors.}
    Let $\chi_G^{(t)}$ denote the coloring after $t$ steps of color refinement starting from the uniform coloring on the decolorized graph $G = G_{\bs{a}, \bs{b}}$.

    \begin{itemize}[leftmargin=*]
        \item \textit{\textbf{Round $1$.}} 
        In round $1$, WL colors are determined by degrees. 
        By construction, every virtually colored node $z\in Z^{(A)}\cup Z^{(B)}$ has degree $T_{\kappa(z)}$, and the target degrees $T_1, \cdots, T_K$ are pairwise distinct.
        Hence, for any $z, z'\in Z^{(A)}\cup Z^{(B)}$, we have $\chi_G^{(1)}(z) = \chi_G^{(1)}(z')$ if and only if $\kappa(z) = \kappa(z')$. 
        Thus the first color-refinement step recovers the intended virtual colors on all $w$-nodes and $u$-nodes. 
        Also, $\chi_G^{(1)}(x^{(A)}) = \chi_G^{(1)}(x^{(B)})$, because the two distinguished nodes have the same degree. 
        The padding nodes all have the same round-$1$ color, corresponding to degree $T_\perp$.

        \item \textit{\textbf{Round $2$.}} 
        Fix $i, j \in [n]$. 
        If $i\neq j$, then we have $\chi_G^{(2)}\left(w^{(A)}_i\right) \neq \chi_G^{(2)}\left(w^{(B)}_j\right)$, because $\chi_G^{(1)}\left(w^{(A)}_i\right)\neq \chi_G^{(1)}\left(w^{(B)}_j\right)$. 
        Thus only nodes with the same index can possibly match.
        Now fix $i \in [n]$. 
        The round-$2$ color of $w^{(A)}_i$ is determined by its round-$1$ color together with the multiset of round-$1$ colors of its neighbors. 
        These neighbors consist of $x^{(A)}$, the two nodes $u^{(A)}_{i, 1}, u^{(A)}_{i, 2}$, and a number of padding nodes determined only by the virtual color of $w^{(A)}_i$. 
        The same holds for $w^{(B)}_i$. 
        The contribution from $x^{(A)}$ and $x^{(B)}$ is identical, and the padding contribution is also identical on the two sides. 
        Therefore, $\chi_G^{(2)}\left(w^{(A)}_i\right) = \chi_G^{(2)}\left(w^{(B)}_i\right)$ if and only if the two multisets of virtual colors assigned to $u^{(A)}_{i, 1},u^{(A)}_{i, 2}$ and $u^{(B)}_{i, 1},u^{(B)}_{i, 2}$ are equal, which holds if and only if $\bs{a}_i=\bs{b}_i$.
        Consequently, the two sets of round-$2$ colors $\set{\chi_G^{(2)}\left(w^{(A)}_i\right) : i\in[n]}$ and $\set{\chi_G^{(2)}\left(w^{(B)}_i\right) : i\in[n]}$ intersect if and only if there exists $i\in[n]$ such that $\bs{a}_i = \bs{b}_i$. 
        In other words, two rounds of uncolored WL on the decolorized graph solve the predicate $\exists\func{EQ}_{U, n}$.
    \end{itemize}

    \paragraph{Lower Bound.}
    Suppose that there exists a randomized oblivious MPGNN $\mathcal{M}$ with depth $d$, width $w$, and precision $p$ that simulates one color-refinement step on $\mathcal{G}_{n, D}$ with failure probability at most $p_f$. 
    By composing two independent copies of $\mathcal{M}$, Alice and Bob can simulate two WL rounds on $G_{\bs{a},\bs{b}}$. 
    The composed simulator has depth $2d$, and it succeeds with probability at least $(1 - p_f)^2 > 1 - 2p_f > 2/3$.

    \begin{itemize}[leftmargin=*]
        \item \textit{\textbf{The Communication Term.}}
        First consider the communication lower bound. 
        Alice and Bob simulate the composed MPGNN across the Alice-Bob cut. 
        The cut contains exactly one graph edge, and hence each message-passing layer requires only $O(wp)$ bits of communication across the cut. 
        Over $2d$ layers, the simulation uses $O(dwp)$ bits. 
        After the simulation, Alice sends Bob a $C$-bit mask indicating which colors appear in the set $\set{\chi_G^{(2)}\left(w^{(A)}_i\right) : i \in [n]}$. 
        Since $C = \Theta(n)$, this costs $O(n)$ bits. 
        Bob accepts if and only if one of the colors in $\set{\chi_G^{(2)}\left(w^{(B)}_i\right) : i \in [n]}$ appears in Alice's mask. 
        By the WL analysis above, this protocol solves $\exists\func{EQ}_{U, n}$ with failure probability at most $2p_f < 1/3$, communication cost $O(dwp + n)$, and round complexity $O(d)$.        
        The round-sensitive lower bound for $\exists\func{EQ}_{U, n}$ with $|U| = \Theta(n^2)$~\cite{saglam2013communication, brody2016certifying} gives communication $\Omega(n\log^{(r)} n)$ for $r$-round public-coin protocols. 
        If $r = o(\log^* n)$, then $\log^{(r)} n\to\infty$, so the required communication is $\omega(n)$. 
        Thus an $O(n)$-communication protocol must have $r = \Omega(\log^* n)$.
        If $dwp = o(n)$, the constructed protocol has $O(n)$ communication and $O(d)$ rounds, so $d = \Omega(\log^* n)$.
        Hence either $d = \Omega(\log^* n)$ or $dwp = \Omega(n)$. 
        Equivalently, $d = \Omega\left(\min\left\{\log^* n,\frac{n}{wp}\right\}\right)$.

        \item \textit{\textbf{The Diameter Term.}}
        It remains to prove the diameter term. 
        The two side gadgets are connected by a path of length $\Theta(D)$. 
        Suppose, for contradiction, that $d = o(D)$. 
        Then the depth-$2d$ composed MPGNN cannot transmit information from Alice's side to Bob's side through the connecting path. 
        Thus the colors output on Alice's $w$-nodes are functions only of Alice's input and the public randomness, and the colors output on Bob's $w$-nodes are functions only of Bob's input and the public randomness. 
        Alice can therefore locally compute her color set, send the same $O(n)$-bit color mask to Bob, and Bob can decide whether the two color sets intersect. 
        This gives a one-way ($r = 1$) randomized protocol for $\exists\func{EQ}_{U, n}$ with $O(n)$ communication and bounded error.
        However, the one-way randomized communication complexity of $\exists\func{EQ}_{U, n}$ with $|U| = \Theta(n^2)$ is $\Omega(n\log n)$ for bounded error~\cite{saglam2013communication, brody2016certifying}. 
        This contradicts the $O(n)$-bit one-way protocol above. 
        Therefore $d = \Omega(D)$.
    \end{itemize}    
    Combining $d = \Omega(D)$ and $d = \Omega\left(\min\left\{\log^* n, \frac{n}{wp}\right\}\right)$ gives the desired lower bound $d = \Omega\left(D + \min\left\{\log^* n, \frac{n}{wp}\right\}\right)$.
\end{proof}

\section{Proof of Theorem~\ref{thm:instance-dependent-representation}}
\label{app:proof-instance-dependent-representation}
In this section we prove Theorem~\ref{thm:instance-dependent-representation}.
\InstDepRep*
\begin{proof}
Let $K = \set{\chi(u): u \in V(G)}$ be the set of input colors that occur in the instance.
Since $|K| \le |V(G)| = n$, we can choose an injective compression function $c: K \to \set{0, 1}^{\lceil \log_2 n\rceil}$.
The instance-dependent construction first uses an input embedding function that maps each color $k \in K$ to $c(k)$ and maps the remaining colors in $[C] \setminus K$ arbitrarily.
Since $wp \ge \lceil \log_2 n\rceil$, each node can send the compressed color to all neighbors in one message-passing layer.

After this layer, node $u$ has access to the compressed WL-type $\left(c(\chi(u)), \mset{c(\chi(v)): v \in N(u)}\right)$, represented by a canonical vectorization, for example by the method used in Theorem~\ref{thm:randomized-upper-large-color}.
We denote this vector by $\bs{t}_u$.
Because $c$ is injective on the input colors that occur, this compressed WL-type is the same for two nodes $u, v \in V(G)$ if and only if $\WLtype_{G,\chi}(u) = \WLtype_{G,\chi}(v)$.

There are at most $n$ compressed WL-types that occur in the input instance.
Since $C \ge n$, we can choose an injective mapping from these vectorized compressed WL-types $\bs{t}_u$ to $[C]$.
The instance-dependent decoder implements this lookup table.
Therefore, two nodes receive the same output color if and only if they have the same WL-type.
\end{proof}

\section{Proof of Theorem~\ref{thm:instance-dependent-adaptation-lower}}
\label{app:proof-instance-dependent-adaptation-lower}
In this section we prove Theorem~\ref{thm:instance-dependent-adaptation-lower}.

\InstDepAdaLower*
\begin{proof}
We reduce the Equality problem to the decentralized parameter-adaptation process. 
Given an instance $(\bs{a}, \bs{b})\in \set{0, 1}^m \times \set{0, 1}^m$, Alice and Bob construct their two sides of the hard graph $G_{\bs{a},\bs{b}}$ from the proof of Theorem~\ref{thm:deterministic-lower}. 
The initial parameter $\theta_0$, the adaptation algorithm, and all target-independent information are fixed before the inputs $(\bs{a}, \bs{b})$ are known.

Starting from the uniform coloring $\widehat{\chi}^{(0)}$, Alice and Bob simulate three consecutive applications of the decentralized parameter-adaptation framework. 
Concretely, for $t = 0, 1, 2$, they run $\theta^{(t)}_{r} = \alg{Adapt}^{(r)}(\theta_0, G_{\bs a,\bs b}, \widehat\chi^{(t)})$ and then compute $\widehat\chi^{(t + 1)} = \mathcal{N}_{\theta^{(t)}_{r}}\left(\widehat\chi^{(t)}; G_{\bs{a},\bs{b}}\right)$.
By induction on $t$, each $\widehat\chi^{(t)}$ is equivalent to the $t$-th WL coloring $\chi^{(t)}_{G_{\bs a, \bs b}}$. 
Indeed, $\widehat{\chi}^{(0)}$ is the uniform coloring, and the success guarantee of the framework implies that one application maps any valid input coloring to a coloring equivalent to the next WL color refinement.

We first prove the communication lower bound. 
During one application of the decentralized parameter-adaptation framework, the $r$ adaptation rounds transmit at most $O(rdwp)$ bits across the Alice-Bob cut, while the final depth-$d$ inference pass transmits at most $O(dwp)$ bits across the cut. 
Repeating the framework three times only changes the communication by a constant factor. 
After the three applications, Alice holds the final color of $x^{(A)}$, and Bob holds the final color of $x^{(B)}$. 
They compare these two colors using $O(\log C)$ additional bits.

By the hard-instance property proved in Theorem~\ref{thm:deterministic-lower}, $\chi^{(3)}_{G_{\bs{a}, \bs{b}}}\left(x^{(A)}\right)=\chi^{(3)}_{G_{\bs{a}, \bs{b}}}\left(x^{(B)}\right) \iff \bs{a} = \bs{b}$.
Since $\widehat\chi^{(3)}\equiv \chi^{(3)}_{G_{\bs{a},\bs{b}}}$, comparing the two final output colors solves $\func{EQ}_m$. 
Therefore the decentralized parameter-adaptation framework yields a deterministic communication protocol for $\func{EQ}_m$ with communication $O(rdwp + dwp + \log C)$.
The deterministic communication complexity of $\func{EQ}_m$ is $\Omega(m)$. 
Since $\log C = o(m)$, we obtain $(r + 1)dwp = \Omega(m)$.

We next prove the locality lower bound. 
Let $\ell = \func{dist}_{G_{\bs a,\bs b}}\left(x^{(A)}, x^{(B)}\right) = \Theta(D)$. 
Over one application of the pipeline, target-dependent information can travel distance at most $rd + d$. 
Over three applications, it can travel distance at most $3(rd + d)$. 
Suppose, for contradiction, that $rd + d = o(D)$. 
Then it implies $3(rd + d) < \frac{\ell}{2}$ for sufficiently large parameters.
Then the final output color of $x^{(A)}$ is independent of Bob's input $\bs{b}$, and the final output color of $x^{(B)}$ is independent of Alice's input $\bs{a}$. 
Hence there exist functions $F, H : \set{0, 1}^m\to [C]$ such that the final output color of $x^{(A)}$ is $F(\bs{a})$, while the final output color of $x^{(B)}$ is $H(\bs{b})$.
Correctness of the adaptation framework and the hard-instance property imply $F(\bs{a}) = H(\bs{b}) \iff \bs a = \bs{b}$ for all $\bs{a}, \bs{b} \in \set{0, 1}^m$. 
In particular, $F(\bs{a}) = H(\bs{a})$ for every $\bs{a}$. 
Moreover, $F$ must be injective. 
If $F(\bs{a}) = F(\bs{a}')$, then $F(\bs{a}') = F(\bs{a}) = H(\bs{a})$, which implies $\bs{a}' = \bs{a}$. 
Therefore $|\func{Im}(F)| = 2^m$, and hence $C\ge 2^m$. 
This contradicts $\log C = o(m)$. 
Thus $(r + 1)d = \Omega(D)$.

Finally, we have $r = \Omega\left(\frac{D}{d} + \frac{m}{dwp}\right)$.
\end{proof}

\end{document}